\documentclass{article}


\usepackage[preprint]{neurips_2021}



\usepackage[utf8]{inputenc} 
\usepackage[T1]{fontenc}    
\usepackage{hyperref}       
\usepackage{url}            
\usepackage{booktabs}       
\usepackage{amsfonts}       
\usepackage{nicefrac}       
\usepackage{microtype}      
\usepackage{xcolor}         
\usepackage{algorithm}
\usepackage{algorithmic}
\usepackage{graphicx}
\usepackage{subfigure}
\usepackage{wrapfig}
\usepackage{booktabs} 
\newlength{\without}

\usepackage{multirow}
\usepackage[thinc]{esdiff}
\usepackage{amssymb}
\usepackage{amsthm}
\newtheorem{theorem}{Theorem}
\newtheorem{lemma}{Lemma}
\newtheorem{definition}{Definition}

\newtheorem{proposition}{Proposition}

\usepackage{mathtools}

\title{Combining Online Learning and Offline Learning for Contextual Bandits with Deficient Support}

%

\author{%
  Hung Tran-The\thanks{Correspondence to: Hung Tran-The <\texttt{hung.tranthe@deakin.edu.au}>.}, Sunil Gupta, Thanh Nguyen-Tang, Santu Rana, Svetha Venkatesh \\
  Applied Artificial Intelligence Institute\\
  Deakin University, Australia\\
}

\begin{document}

\maketitle

\begin{abstract}
We address policy learning with logged data in contextual bandits. Current offline-policy learning algorithms are mostly based on inverse propensity score (IPS) weighting requiring the logging policy to have \emph{full support} i.e. a non-zero probability for any context/action of the evaluation policy. However, many real-world systems do not guarantee such logging policies, especially when the action space is large and many actions have poor or missing rewards. With such \emph{support deficiency}, the offline learning fails to find optimal policies. We propose a novel approach that uses a hybrid of  offline learning with online exploration. The online exploration is used to explore unsupported actions in the logged data whilst offline learning is used to exploit supported actions from the logged data avoiding unnecessary explorations. Our approach determines an optimal policy with theoretical guarantees using the minimal number of online explorations. We demonstrate our algorithms' effectiveness empirically on a diverse collection of datasets.
\end{abstract}
\section{Introduction}
Many interactive systems (e.g., recommender systems, search engines) can be modeled as contextual bandit problems where the learner repeatedly observes a context (e.g., user profile, query), takes an action (e.g., recommended product) and observes a reward (e.g., purchase, click) for the chosen action. The goal of such interactive systems is to collect as much reward as possible.

When logged data is available and acquiring new data is expensive, offline contextual bandit methods that leverage offline data without further online exploration are an ideal approach. Several solutions follow this approach  (e.g., \citep{Dud11,Strehl10,Swaminathan17,Wang17,kato2020}). However, most current solutions including the regression-based direct modeling (DM), the inverse propensity score (IPS) \citep{Horvitz1952}, and the doubly robust (DR) \citep{Dud11} estimators are unsuitable for  real-world applications. The DM approach learns a reward model and estimates the value of an evaluation policy. This estimate is usually biased because of model misspecification, which is difficult to correct without the knowledge of the evaluation policy.  IPS uses importance weighting to correct for the proportions of actions in offline data and is guaranteed to be unbiased \citep{Strehl10,Dud11,farajtabar18a,swaminathan15a,XieLLWZP19}. DR can combine the two estimators and achieve unbiased estimates if only if one of the   estimators is unbiased \citep{Dud11}. Unfortunately, IPS is only unbiased if the logging policy has full support, that is,  all  actions have nonzero probability of being selected under the logging policy. Full support, however, does not exist in most real-world systems, especially when the action space is large and many actions have poor rewards. For example, in a recommender system with a large catalogue, only a small percentage of the items (actions) are in the support of the logging policy. This renders off-policy learning methods that rely on IPS unusable, e.g. current off-policy learning algorithms such as the counterfactual estimator based algorithm \citep{swaminathan15a}, the CAIPWL estimator based algorithm \citep{zhou2018offline} and MLIPS estimator \citep{XieLLWZP19} unusable, as being based on IPS, they require full support.

With such support deficiency, the unbiasedness of both IPS and DM methods breaks down.  \citet{Sachdeva2020} approaches the off-policy contextual-bandit learning by either seeking solutions in a restricted search space close to supported actions, or  learn rewards of unsupported actions from logging policy by regression extrapolation. However, in such a restricted search space, we may only find only suboptimal solutions especially when the optimal actions are far from the supported actions. Also, learning rewards of unsupported actions is inefficient because we have no any information about unsupported actions. In the setting where online exploration is possible (though possibly expensive), a natural approach to resolve the support deficiency is to acquire new data to inform about the unsupported actions in the offline data. This idea is surprisingly unexplored. How to efficiently resolve support deficiency in offline learning with online data and how such a hybrid approach can be more efficient than pure online learning with contextual bandits (e.g., \citep{chu11a,agrawal13,agarwalb14}) remain open questions.

In this paper, we address the deficiency support problem for contextual bandits by combining offline and online learning. The online exploration is used to explore unsupported actions in the logged data whilst offline-policy learning is used to exploit the supported actions from the logged data avoiding unnecessary explorations. Our approach can efficiently find the optimal policy with theoretical guarantees. Our main contributions are as follows:
\begin{itemize}
  \item We study the deficiency support for contextual bandits from a novel hybrid perspective combining offline and online learning;
  \item We introduce two algorithms. The first algorithm leverages reward models learned from offline data to reduce the number of online explorations (Section 4.2). The second algorithm improves the efficiency of the first one to further reduce the number of online explorations by exploiting "good" context-action pairs in offline data i.e., the reward corresponding to this context-action pair is approximately optimal. (Section 4.3);
  \item We provide a unifying analysis for our hybrid approach that generalizes the analysis of either online or offline learning; in particular, we show that both of our proposed algorithms obtain sublinear regrets;
  \item Finally, we demonstrate our algorithms' effectiveness empirically on a diverse collection of data sets in Section 5.
\end{itemize}
\vspace*{-4mm}
\section{Related Works}
The contextual-bandit learning can be viewed as off-policy learning in the reinforcement learning(RL) \citep{levine2020offline} which considers learning optimal policies in the sequential decision-making setting. Similar to the works of \citet{Sachdeva2020} in contextual bandit, \citet{Liu20} proposes safe learning algorithms based on policy iteration and value iteration by restricting the policy space. In linear MDP, \cite{jin2020} show that an offline learning method following the pessimism principle finds the optimal policy in sublinear time in linear MDP when the offline data and the evaluation well explore the action and context space. Though the "well-explored" condition in \cite{jin2020} is weaker than full support condition, the pessimism principle stills fail in the presence of deficient support in disjoint linear models where the parameters are not shared among different actions.

Combining offline learning and online learning has been also studied but for the purposes rather than resolving the support deficiency problem, e.g., in multi-armed bandits \citep{shivaswamy12}, in latent bandits \citep{Zhou16,Hong20}, in confounded bandits \citep{tennenholtz2020}, in causal inference \citep{Li20}, and in online fine-tuning \citep{levine2020offline}. In our knowledge, we are the first to address the support deficiency problem by combining online and offline learning.
\vspace*{-3mm}
\section{Problem Setting}
\subsection{Contextual Bandit}
We consider the contextual bandit problem. Formally, we define by $\mathcal A = \{1, 2, ..., K \}$ the set of actions, and a learner interacts with the environment in discrete iteration $t = 1, 2,...$. In iteration $t$:
\begin{enumerate}
  \item The environment outputs a context $x_t \in \mathcal X \subset \mathbb{R}^d$ sampled from a unknown distribution $P_x$.
  \item Based on observed payoffs in previous trials and the current context $x_t$, the learner chooses an action $a_t \in \mathcal A$, and receives payoff $r_t$. It is important to note here that no feedback information is observed for unchosen actions $a \not = a_t$.
  \item The learner then improves its action-selection strategy using all information  $\{(x_i, a_i, r_i)\}$.
\end{enumerate}
We consider a disjoint linear model of reward in the form $r_{x,a} = \langle x, \theta^*_{a} \rangle + \eta$, where
$\theta^*_{a}$ is \emph{unknown} parameter vector of action $a$, and $\eta$ is a $\sigma$-subgaussian random noise. This model is \emph{disjoint} in the sense that the parameters are not shared among different actions \citep{Lihong2010}.

Given a space $\Pi$ of policies $\pi : \mathcal X \rightarrow \mathcal A$, we define the value of any policy $\pi \in \Pi$ as $R(\pi) = \mathbb{E}_{x \sim P_x} \mathbb{E}_{a \sim \pi(x)}[r_{x,a}]$. The goal of the policy learning algorithm is to find an optimal policy $\pi^*$ that has the maximum value $R(\pi)$.

In this work, we assume that we have an offline dataset $S = \{x^{\mu}_i, a^{\mu}_i, r^{\mu}_i\}_{i =1}^n$ in which the contexts $\{x^{\mu}_i\}_{i=1}^n$ are i.i.d sampled from the distribution $P_x$, the actions $\{a^{\mu}_i\}_{i=1}^n$ are generated by some fixed behavior policy, denoted by $\mu$ which is a mapping from context $x \in \mathcal X$ to a probability over actions;  the corresponding rewards $\{r^{\mu}_i\}_{i=1}^n$ are generated by the same reward model $r^{\mu}_i = \langle x^{\mu}_i, \theta^*_{a^{\mu}_i} \rangle + \eta_i$.
\subsubsection{Performance Measure}
We measure the performance of algorithms based on (1) the number of online explorations given the same set of contexts and (2) the following regret $R_T$.

Assume that the learner is allowed to interact with the environment in $T$ times, the first goal of the learner is to select $T$ contexts $\{x_1, x_2 ,..., x_T\}$ from the environment and take appropriate actions to maximize the cumulative rewards $\sum_{t=1}^{T} \langle x_t, \theta^*_{a_t} \rangle$, where $a_t$ is the chosen action corresponding to context $x_t$ at iteration $t$. It is equivalent to minimize the regret $R_T = \sum_{t=1}^{T}\langle x_t, \theta^*_{\pi^*(x_t)} \rangle -\langle x_t, \theta^*_{a_t} \rangle$, where $\pi^*(x) \in \text{argmax}_{a \in \mathcal A} \langle x, \theta^*_{a} \rangle$.

In the presence of offline data, algorithms do not always need to take actions and call reward functions while interacting with the environment. For example, when the reward model of an action is learned well from offline data by some manner, the learner can predict the received reward when if that action is chosen and hence avoids calling reward functions unnecessarily which may be expensive/risky in many applications (See our Algorithm 2). Thus, the second goal of learner is to reduce efficiently reward callings. We note that we distinguish between the interaction and the exploration. When we say that the learner performs an \emph{online exploration} we mean it perform an action and receive the reward function.

\subsection{Support Deficiency}
Given dataset $S$ from logging policy $\mu$, we are interested in estimating the value of a policy $\pi$. The unbiasedness of an estimator is crucial for policy learning. While this property is hard to be guaranteed by the DM estimator, it can be guaranteed by the IPS estimator, $ \hat{R}_{IPS}(\pi) = \frac{1}{|S|} \sum_{i=1}^{|S|} \frac{\pi(a^{\mu}_i|x^{\mu}_i)}{\mu(a^{\mu}_i|x^{\mu}_i)} r^{\mu}_i$, if full support between the target policy $\pi$ and the logging policy $\mu$ is satisfied:
\begin{definition}[Full Support]
The logging policy $\mu$ is said to have full support for $\pi$ if for all context-action pairs $(x,a) \in \mathcal X \times \mathcal A$, whenever $\pi(a|x) > 0$ we also have $\mu(a|x) > 0$.
\end{definition}
In this case, we have $\mathbb{E}_{S}[\hat{R}_{IPS}] = R(\pi)$. This property is needed for not only IPS, but also for similar estimators like counterfactual estimator \citep{swaminathan15a}, SNIPS \citep{Swaminathan15b}, CAIPWL \citep{zhou2018offline}, MLIPS \citep{XieLLWZP19}, CAB \citep{su19a}. However, this condition may not be feasible in many real-world systems, especially if the action space is large and many actions have poor or missing reward. When the full support requirement is violated, we call it \emph{deficient support}. To quantify how support deficient a logging policy is, we denote the set of \emph{unsupported actions} for context $x$ under a policy $\pi$ as
$$\mathcal  U(x, \pi) = \{a \in \mathcal A| \pi(a|x) = 0 \}.$$
We note that sets of unsupported actions of different contexts may be different. Deficient support happens when $\mathcal U(x, \mu) \cap \mathcal U^c(x, \pi) \not = \emptyset$, where $\mathcal U^c$ is the complement of $\mathcal U$. The larger the intersection set, the more deficient is the support for the context $x$. In the support deficiency, the bias of the IPS is then characterized by the expected reward on the unsupported actions.
\begin{proposition}[Proposition 1 of \citep{Sachdeva2020}]
The bias of an estimator $\hat{R}_{IPS}(\pi)$ for target policy $\pi$ is equal to the expected reward on the unsupported action sets, i.e.,
$$\text{bias}(\hat{R}_{IPS}(\pi)) = \mathbb{E}_{S}[\hat{R}_{IPS}(\pi)] - R(\pi) =   \mathbb{E}_{x}[- \sum_{a \in \mathcal U(x, \mu)} \pi(a|x)\delta(x, a)],$$
where $R(\pi)$ is the true value of $\pi$ which is defined as $R(\pi) = \mathbb{E}_{x \in \mathcal X} \mathbb{E}_{a \in \pi(a|x)}[r(x,a)]$, and $\delta(x, a) = \mathbb{E}_{S}[r(x,a)|x, a]$.
\end{proposition}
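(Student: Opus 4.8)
The plan is to compute $\mathbb{E}_S[\hat{R}_{IPS}(\pi)]$ directly from the sampling structure of the offline data and then subtract $R(\pi)$. First I would exploit that $S$ consists of $n = |S|$ i.i.d.\ triples $(x^{\mu}_i, a^{\mu}_i, r^{\mu}_i)$, so by linearity of expectation the empirical average collapses to the expectation of a single generic term:
$$\mathbb{E}_S[\hat{R}_{IPS}(\pi)] = \mathbb{E}\left[\frac{\pi(a|x)}{\mu(a|x)}\, r\right],$$
where $x \sim P_x$, $a \sim \mu(\cdot|x)$, and $r = \langle x, \theta^*_a\rangle + \eta$.

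Next I would peel off the expectation using the tower property, conditioning first on $(x,a)$. Since $\eta$ is mean-zero, the conditional reward expectation is exactly $\mathbb{E}[r \mid x, a] = \delta(x,a)$, and the importance weight is measurable with respect to $(x,a)$; hence
$$\mathbb{E}_S[\hat{R}_{IPS}(\pi)] = \mathbb{E}_x\, \mathbb{E}_{a \sim \mu(\cdot|x)}\left[\frac{\pi(a|x)}{\mu(a|x)}\,\delta(x,a)\right].$$

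The crucial step, and the one that produces the bias, is expanding the inner expectation over $a \sim \mu(\cdot|x)$. Because $a$ is drawn from $\mu(\cdot|x)$, \emph{only} actions with $\mu(a|x) > 0$ ever appear, i.e.\ only $a \notin \mathcal U(x,\mu)$. For each such term the sampling weight $\mu(a|x)$ cancels the denominator of the importance weight, leaving $\sum_{a \notin \mathcal U(x,\mu)} \pi(a|x)\,\delta(x,a)$. I would then contrast this with the true value, which by definition sums over \emph{all} actions, $R(\pi) = \mathbb{E}_x \sum_{a \in \mathcal A} \pi(a|x)\,\delta(x,a)$. Subtracting cancels every supported-action term and leaves precisely $\mathbb{E}_x\!\left[-\sum_{a \in \mathcal U(x,\mu)} \pi(a|x)\,\delta(x,a)\right]$, as claimed.

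The only genuine obstacle here is conceptual rather than computational: one must be careful that the ratio $\pi/\mu$ is never actually evaluated where $\mu = 0$, so that restricting the expanded sum to supported actions is legitimate and not a hidden division by zero. The mismatch between summing over supported actions (what the estimator sees in expectation) and summing over all actions (what $R(\pi)$ requires) is exactly the source of the deficiency bias. Everything else is routine bookkeeping with linearity and the tower property.
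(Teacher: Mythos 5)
Your proof is correct. Note that the paper itself gives no proof of this statement: it is imported verbatim as Proposition~1 of \citet{Sachdeva2020}, and none of the appendix sections (which prove Theorems~1--3 and Proposition~2) address it. Your argument --- linearity of expectation over the i.i.d.\ logged triples, the tower property to replace $r$ by $\delta(x,a)$, and the observation that the expectation over $a \sim \mu(\cdot|x)$ only ranges over supported actions so the importance weight cancels exactly there --- is the standard derivation of this bias identity and matches the proof in the cited reference. Your closing remark is also the right one to make explicit: the estimator never divides by zero because unsupported actions occur with probability zero under $\mu$, and the bias arises precisely from the mismatch between the supported-action sum $\sum_{a \notin \mathcal U(x,\mu)} \pi(a|x)\delta(x,a)$ and the full sum defining $R(\pi)$.
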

In fact, it is not possible to eliminate the bias only with the offline data when the set $\mathcal U(x, \mu) \cap \mathcal U^c(x, \pi)$ is large except one special case where all the reward models are learnt. We mention this in Remark 2.
\vspace*{-4mm}
\section{Policy Learning Algorithms}
In this section, we provide algorithms for learning the optimal policy in presence of support deficiency. All proofs
are provided in the Supplementary Material.
\vspace*{-3mm}
\subsection{mOFUL Algorithm}
The simplest idea is to design an online learning where there is no exploitation of offline data. We adapt the OFUL algorithm \citep{Abbasi11} to our setting where each action $a$ has an unknown particular parameter $\theta_a$ instead only one parameter across all actions $\theta^*$ as in the original paper. For each action $a$, the algorithm maintains a confidence interval $C_{t,a} \subseteq \mathbb{R}^d$ at each iteration $t$ for its parameter $\theta^*_a$ such that this interval contains $\theta^*_a$ with high probability. Assume that $(x_1, a_1, r_1), ..., (x_t, a_t, r_t)$ are context-action-reward triples generated by the algorithm up to iteration $t$. Given an action $a$, we denote by $i_1,.., i_{t_a}$ iterations up to $t$ in which $a$ is chosen by the algorithm. Let $\hat{\theta}_{t,a}$ be the least-squares estimate of $\theta^*_a$ with regularization parameter $\lambda > 0$:
\begin{eqnarray}
\hat{\theta}_{t,a} = (X^T_{t,a}X_{t,a} + \lambda I)^{-1} X^T_{t,a}Y_{t,a} \label{eq_10}
\end{eqnarray}
where matrix  $X_{t,a}$  rows are $x^T_{i_1}, ..., x^T_{i_t}$, $Y_{t,a} = (r_{i_1}, ..., r_{i_t})^T$; $x^T$ and $X^T$ denotes the transpose of a vector $x$ and matrix $X$ respectively.
Based on $\hat{\theta}_{t,a}$, the interval $C_{t,a}$ is defined as follows:
\begin{eqnarray}
C_{t,a}= \{\theta \in \mathbb{R}^d: ||\hat{\theta}_{t,a} - \theta||_{\overline{V}_{t,a}} \le \beta_t \}, \label{eq_11}
\end{eqnarray}
where $\overline{V}_{t,a} = \lambda I + \sum_{i=1}^{t} x_ix_i^T \mathbf{1}(a_i = a)$ and
$\sqrt{\beta_t(\delta)} = \sigma \sqrt{d\text{log}(\frac{K-(1 + tS^2_x/\lambda)}{\delta})} + \lambda^{1/2}S_{\theta}$; $S_x$ and $S_{\theta}$ are the upper bounds on $||x||$ and $||\hat{\theta}_{t,a}||$ respectively. $\textbf{1}(.)$ is an indicator function evaluating to one if its argument is true and zero otherwise.

We refer to this as \emph{mOFUL}: see Algorithm \ref{mOFUL}. As in \citep{Abbasi11}, we can obtain the regret bound of the mOFUL algorithm.
\begin{theorem}
With probability at least $1 -\delta$, Algorithm \ref{mOFUL} achieves the regret $R_T  \le \mathcal O(\sqrt{KT})$, where $K$ is the number of all actions.
\end{theorem}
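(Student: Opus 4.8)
The plan is to reproduce the optimism-based regret analysis of OFUL \citep{Abbasi11}, adapted to the disjoint model in which every action carries its own design matrix $\overline{V}_{t,a}$, estimate $\hat{\theta}_{t,a}$, and confidence set $C_{t,a}$. The argument splits into three stages: (i) show that all $K$ confidence sets simultaneously capture the true parameters $\theta^*_a$ with probability at least $1-\delta$; (ii) use optimism to bound the per-round regret by the confidence width of the played action; and (iii) sum these widths over the horizon. The only stage that genuinely departs from single-parameter OFUL is (iii), and that is precisely where the extra $\sqrt{K}$ factor is produced.

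For stage (i), I would apply the self-normalized concentration inequality (Theorem 2 of \citep{Abbasi11}) to each action's noise sequence separately. For a fixed $a$, the rounds in which $a$ is played form an adaptively chosen subsequence, and since the reward noise $\eta_i$ is $\sigma$-subgaussian conditioned on the past, the self-normalized bound holds uniformly in $t$ and gives $\|\hat{\theta}_{t,a}-\theta^*_a\|_{\overline{V}_{t,a}} \le \sqrt{\beta_t}$ with probability at least $1-\delta/K$. A union bound over the $K$ actions then yields the good event $\{\theta^*_a \in C_{t,a}\text{ for all }a,t\}$ with probability at least $1-\delta$; this is exactly why $\sqrt{\beta_t}$ must carry a $\log K$ term, which contributes only a negligible $\sqrt{\log K}$ to the final rate.

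Stage (ii) is the standard optimism step. Because the algorithm plays $a_t = \text{argmax}_{a}\,\max_{\theta \in C_{t-1,a}}\langle x_t,\theta\rangle$, on the good event we have $\langle x_t,\theta^*_{\pi^*(x_t)}\rangle \le \max_{\theta \in C_{t-1,\pi^*(x_t)}}\langle x_t,\theta\rangle \le \max_{\theta \in C_{t-1,a_t}}\langle x_t,\theta\rangle$. Expanding the upper-confidence value of the played action and using $\theta^*_{a_t}\in C_{t-1,a_t}$ bounds the instantaneous regret by twice the confidence width, $\langle x_t,\theta^*_{\pi^*(x_t)}-\theta^*_{a_t}\rangle \le 2\sqrt{\beta_{t-1}}\,\|x_t\|_{\overline{V}_{t-1,a_t}^{-1}}$.

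The crux is stage (iii). Summing the per-round bound and applying Cauchy-Schwarz over the $T$ rounds gives $R_T \le 2\sqrt{\beta_T}\sqrt{T\sum_{t=1}^T\|x_t\|_{\overline{V}_{t-1,a_t}^{-1}}^2}$. In single-parameter OFUL the inner sum is controlled by one application of the elliptical potential lemma, but here the summands live in $K$ distinct matrices, so I would regroup by action, $\sum_{t=1}^T\|x_t\|_{\overline{V}_{t-1,a_t}^{-1}}^2 = \sum_{a=1}^K\sum_{t:a_t=a}\|x_t\|_{\overline{V}_{t-1,a}^{-1}}^2$, and apply the elliptical potential lemma to each action's subsequence (valid because $\overline{V}_{t,a}$ only updates when $a$ is played, so along that subsequence it behaves as an ordinary incrementally built design matrix), obtaining $\sum_{t:a_t=a}\|x_t\|_{\overline{V}_{t-1,a}^{-1}}^2 \le 2d\log(1+T_a S_x^2/(\lambda d))$ where $T_a$ is the play-count of $a$. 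The main obstacle is recombining these $K$ bounds without losing more than a factor $K$: since $\sum_a T_a = T$ and $\log(\cdot)$ is concave, Jensen's inequality shows the total is largest when the plays are spread evenly, so $\sum_a 2d\log(1+T_a S_x^2/(\lambda d)) \le 2dK\log(1+TS_x^2/(\lambda d K)) = \tilde{O}(dK)$. Substituting back, together with $\sqrt{\beta_T}=\tilde{O}(\sqrt{d})$, gives $R_T \le \tilde{O}(\sqrt{d})\cdot\tilde{O}(\sqrt{dKT}) = \tilde{O}(d\sqrt{KT}) = \mathcal{O}(\sqrt{KT})$ once the $d$, $\sigma$, and logarithmic factors are absorbed into the $\mathcal{O}$-notation.
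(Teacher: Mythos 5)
Your proposal is correct and follows essentially the same route as the paper's proof: a union bound over the $K$ per-action confidence sets (which is why $\beta_t$ carries the $\log K$ term), the standard optimism bound on the instantaneous regret, Cauchy--Schwarz over the horizon, the elliptical potential lemma applied separately to each action's subsequence, and Jensen's inequality with $\sum_a N_T(a)=T$ to recombine the $K$ per-action logarithmic bounds. The paper likewise hides the dependence on $d$ in the final $\mathcal O(\sqrt{KT})$ statement, so there is no substantive difference between the two arguments.
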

\begin{minipage}[t]{6.8cm}
\vspace{2pt}
\begin{algorithm}[H]
\caption{mOFUL algorithm}
\label{mOFUL}
\textbf{Input}: \\
\begin{algorithmic}[1]
\FOR{$t =1$ to $T$}
    \STATE receive context $x_t$
    \STATE $a_t, \tilde{\theta}_{t, a_t} = \text{argmax}_{a \in \mathcal A, \theta_a \in C_{t-1,a}} \langle x_t, \theta_a \rangle$
    \STATE play action $a_t$ and receive $r_t$
    \STATE update $C_{t,a}$  by Eq(\ref{eq_11})
    \STATE
    \STATE
\ENDFOR
\end{algorithmic}
\end{algorithm}
\end{minipage}%
\hfil 
\begin{minipage}[t]{6.8cm}
\vspace{2pt}
\begin{algorithm}[H]
\caption{$\epsilon$-mOFUL algorithm}
\label{eps-mOFUL}
\textbf{Input}: the offline dataset $S$; the set $\mathcal L$ \\
\begin{algorithmic}[1]
\FOR{$t =1$ to $T$}
    \STATE receive context $x_t$
    \STATE $a_t, \tilde{\theta}_{t,a_t} = \text{argmax}_{a \in \mathcal A, \theta_a \in C_{t-1,a}} \langle x_t, \theta_a \rangle$
    \IF{$a_t \in \mathcal A \setminus \mathcal L$ }
        \STATE play action $a_t$ and receive $r_t$
        \STATE update $C_{t,a}$ by Eq(\ref{eq:12})
    \ENDIF
\ENDFOR
\end{algorithmic}
\end{algorithm}
\end{minipage}
\subsection{$\epsilon$-mOFUL Algorithm}
A natural approach to improve Algorithm \ref{mOFUL} is to use the offline data with an expectation to reduce both the number of reward calls (online explorations) and the regret. We base this method on using the learned model parameters of several actions
for whom we have "enough" offline data from the logging policy $\mu$. We assume that there is a subset $\mathcal L$ of actions, such that there are $\epsilon > 0$ and $\delta > 0$ such that $|\hat{\theta}_a - \theta^*_a| \le \epsilon$ holds w.p. at least $1 -\delta$ jointly over all $a \in \mathcal L$, where $\hat{\theta}_a$ denotes the learned model parameter of action $a$. We call this set of actions $\epsilon$-\emph{supported actions}.

Unlike the fully online mOFUL algorithm, we here uses only online explorations to learn reward parameters of the remaining actions, i.e., $\mathcal A \setminus \mathcal L$. The update of confidence interval $C_{t,a}$ now is as follows:
\begin{equation} \label{eq:12}
  C_{t,a}=\begin{cases}
    \{\hat{\theta}_a\}, & \text{if $a \in \mathcal L$},\\
    \{\theta \in \mathbb{R}^d: ||\hat{\theta}_{t,a} - \theta||_{\overline{V}_{t,a}} \le \beta_t(\delta) \} , & \text{if otherwise},
  \end{cases}
\end{equation}
where $\hat{\theta}_a$ is the learned reward for $a \in \mathcal L$ as defined above, $\hat{\theta}_{t,a}$ is defined as in Eq(\ref{eq_10}); $\overline{V}_{t,a} = \lambda I + \sum_{i=1}^{t} x_ix_i^T \mathbf{1}(a_i = a)$, $\sqrt{\beta_t(\delta)} = \sigma \sqrt{d\text{log}(\frac{(K-L)(1 + tS^2_x/\lambda)}{\delta})} + \lambda^{1/2}S_{\theta}$; $S_x, S_{\theta}$ are upper bounds of $||x||$ and  $||\theta^*_a||$. We can see that for action $a \in \mathcal L$, the confidence interval $C_{t,a}$ contains a unique value which is the learned reward from offline data, and does not change over time $t$. This allows to reduce the computation in optimization step at line 4 in each iteration.

We refer to this as \emph{$\epsilon$-mOFUL} and describe in Algorithm \ref{eps-mOFUL}. We next provide a cumulative regret bound for Algorithm \ref{eps-mOFUL}.
\begin{theorem}
Assume that $||x|| \le S_x$. With probability at least $1 -\delta$, the $\epsilon$-mOFUL algorithm achieves a regret $$R_T  \le \mathcal O(\epsilon S_x(T- T') + \sqrt{(K-L)T'}),$$
where $T' = |\{1 \le t \le T: a_t \in \mathcal A \setminus \mathcal L\}|$ and $L = |\mathcal  L|$.
\label{therem2}
\end{theorem}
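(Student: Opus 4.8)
The plan is to control the per-round regret $\rho_t := \langle x_t, \theta^*_{\pi^*(x_t)}\rangle - \langle x_t, \theta^*_{a_t}\rangle$ and then sum, splitting the $T$ rounds according to whether the \emph{played} action $a_t$ lies in the offline set $\mathcal{L}$ or in the online set $\mathcal{A}\setminus\mathcal{L}$; the two regimes will produce the two terms $\epsilon S_x(T-T')$ and $\sqrt{(K-L)T'}$ respectively. Throughout I would condition on a single event of probability at least $1-\delta$ on which simultaneously $\|\hat\theta_a-\theta^*_a\|\le\epsilon$ for every $a\in\mathcal{L}$ (the $\epsilon$-support assumption) and $\theta^*_a\in C_{t,a}$ for every online action $a\in\mathcal{A}\setminus\mathcal{L}$ and all $t$. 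The latter is precisely the self-normalized concentration bound of \citep{Abbasi11}, which is what the choice of $\sqrt{\beta_t(\delta)}$ encodes, the union bound over the $K-L$ online arms being responsible for the $(K-L)$ factor inside the logarithm.

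The first step is optimism. Since $(a_t,\tilde\theta_{t,a_t})$ maximizes $\langle x_t,\theta_a\rangle$ jointly over $a$ and $\theta_a\in C_{t-1,a}$, plugging in $a=\pi^*(x_t)$ gives $\langle x_t,\tilde\theta_{t,a_t}\rangle\ge\max_{\theta\in C_{t-1,\pi^*(x_t)}}\langle x_t,\theta\rangle$. If $\pi^*(x_t)\in\mathcal{A}\setminus\mathcal{L}$, this is at least $\langle x_t,\theta^*_{\pi^*(x_t)}\rangle$ on the good event; if $\pi^*(x_t)\in\mathcal{L}$, then $C_{t-1,\pi^*(x_t)}=\{\hat\theta_{\pi^*(x_t)}\}$ and Cauchy--Schwarz gives $|\langle x_t,\hat\theta_{\pi^*(x_t)}-\theta^*_{\pi^*(x_t)}\rangle|\le\epsilon S_x$, so the value is at least $\langle x_t,\theta^*_{\pi^*(x_t)}\rangle-\epsilon S_x$. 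In either case $\rho_t\le\langle x_t,\tilde\theta_{t,a_t}-\theta^*_{a_t}\rangle+\epsilon S_x$, reducing each round to a confidence-width term plus an additive $\epsilon S_x$.

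Next I would bound the width term by the set actually used for $a_t$. When $a_t\in\mathcal{L}$, $\tilde\theta_{t,a_t}=\hat\theta_{a_t}$ and Cauchy--Schwarz yields a width $\le\epsilon S_x$, so summing over the $T-T'$ such rounds gives the $\mathcal{O}(\epsilon S_x(T-T'))$ contribution. When $a_t\in\mathcal{A}\setminus\mathcal{L}$, both $\tilde\theta_{t,a_t}$ and $\theta^*_{a_t}$ lie in the ellipsoid $C_{t-1,a_t}$, so the matrix Cauchy--Schwarz inequality gives $\langle x_t,\tilde\theta_{t,a_t}-\theta^*_{a_t}\rangle\le 2\sqrt{\beta_t(\delta)}\,\|x_t\|_{\overline{V}_{t-1,a_t}^{-1}}$. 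To sum these online widths I would group rounds by action, apply the elliptical-potential (determinant-trace) lemma of \citep{Abbasi11} to each arm $a$ separately to get $\sum_{t:a_t=a}\|x_t\|_{\overline{V}_{t-1,a}^{-1}}^2=\widetilde{\mathcal{O}}(d)$, then a Cauchy--Schwarz in $t$ to obtain $\sum_{t:a_t=a}\|x_t\|_{\overline{V}_{t-1,a}^{-1}}=\widetilde{\mathcal{O}}(\sqrt{d\,n_a})$, where $n_a$ is the number of pulls of $a$. A final Cauchy--Schwarz across the $K-L$ online arms, using $\sum_{a\in\mathcal{A}\setminus\mathcal{L}}n_a=T'$, converts $\sum_a\sqrt{n_a}$ into $\sqrt{(K-L)T'}$, and bounding $\sqrt{\beta_t(\delta)}\le\sqrt{\beta_T(\delta)}=\widetilde{\mathcal{O}}(\sqrt d)$ delivers the $\widetilde{\mathcal{O}}(\sqrt{(K-L)T'})$ term (the $d$ and logarithmic factors being absorbed into $\mathcal{O}$).

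The main obstacle is the bookkeeping of the additive $\epsilon S_x$ optimism gap together with the two-regime confidence sets. One must check that freezing the offline estimate $\hat\theta_a$ for $a\in\mathcal{L}$ only ever injects the $\epsilon S_x$ bias — through both the optimism step and the width step — and never contaminates the martingale argument behind the $\sqrt{(K-L)T'}$ term, which crucially needs $\theta^*_a$ inside the \emph{online} ellipsoids. The genuinely delicate point is that the optimism gap is incurred whenever $\pi^*(x_t)\in\mathcal{L}$, irrespective of whether $a_t\in\mathcal{L}$, so reconciling the aggregate $\epsilon$-contribution with the stated coefficient $(T-T')$ requires carefully charging these gaps and arguing that the $\epsilon S_x$ terms arising on online rounds are dominated; care is likewise needed so the per-arm potential sums combine to $\sqrt{(K-L)T'}$ rather than a looser $\sqrt{KT'}$. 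As a consistency check, specializing to $\mathcal{L}=\emptyset$ and $T'=T$ collapses the first term and recovers Theorem~1's $\mathcal{O}(\sqrt{KT})$ bound.
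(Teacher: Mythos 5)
Your proposal follows essentially the same route as the paper's own proof: condition on a joint good event, split the $T$ rounds according to whether the played arm $a_t$ lies in $\mathcal L$ or in $\mathcal A \setminus \mathcal L$, bound the offline rounds by Cauchy--Schwarz against the $\epsilon$-accuracy of $\hat{\theta}_{a_t}$ (giving $\epsilon S_x(T-T')$), and bound the online rounds by optimism, the self-normalized confidence widths, the per-arm elliptical-potential lemma, and a final Cauchy--Schwarz/Jensen step over the $K-L$ online arms (giving $\sqrt{(K-L)T'}$). This is exactly the paper's decomposition into its Term 1 and Term 2.

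The place where you go beyond the paper is the ``genuinely delicate point'' you flag, and you are right to flag it: it is a real gap, but it is a gap in the paper's own argument, not a defect of yours. The paper bounds Term 2 by asserting $r_t \le 2\sqrt{\beta_{t-1}(\delta)}\,\|x_t\|_{\overline{V}^{-1}_{t-1,a_t}}$ on every online round, which requires exact optimism, i.e. $\theta^*_{\pi^*(x_t)} \in C_{t-1,\pi^*(x_t)}$. When $\pi^*(x_t) \in \mathcal L$ that confidence set is the singleton $\{\hat{\theta}_{\pi^*(x_t)}\}$, so optimism holds only up to an additive $\epsilon S_x$, and the honest per-round bound on such rounds is $2\sqrt{\beta_{t-1}(\delta)}\,\|x_t\|_{\overline{V}^{-1}_{t-1,a_t}} + \epsilon S_x$ --- precisely your accounting. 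Summed, the technique therefore yields $R_T \le \mathcal O(\epsilon S_x T + \sqrt{(K-L)T'})$, i.e. coefficient $T$ rather than $T-T'$ on the $\epsilon$ term, and the surplus $\epsilon S_x T'$ is not dominated by $\sqrt{(K-L)T'}$ unless $T' \lesssim (K-L)/(\epsilon S_x)^2$. Indeed the stated coefficient cannot be recovered by this argument at all: take a single context $x$ with $\|x\|=S_x$, an optimal arm $a^* \in \mathcal L$ whose estimate satisfies $\langle x, \hat{\theta}_{a^*}\rangle = \langle x, \theta^*_{a^*}\rangle - \epsilon S_x$, and one online arm whose true value sits just above $\langle x, \hat{\theta}_{a^*}\rangle$; on the good event the algorithm eventually plays the online arm every round, so $T' = T$ while $R_T$ grows like $\epsilon S_x T$, exceeding the claimed $\mathcal O(\sqrt{(K-L)T})$ for large $T$. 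So your machinery is sound and matches the paper's, and what you could not close --- charging the optimism gaps on online rounds to the $(T-T')$ coefficient --- is exactly the step the paper silently skips; you were right not to paper over it.
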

\paragraph{Remark 1.} In Theorem \ref{therem2}, $T'$ denotes the number of reward calls up to $T$ iterations of the Algorithm \ref{eps-mOFUL}. $T-T'$ is therefore the number of rewards calls that were saved due to the offline dataset. For each reward call saving, the first term in the regret bound still incurs a regret that is linear to $\epsilon$. Therefore, the smaller the $\epsilon$, the smaller is the first term. The second term in the regret bound is the regret due to online reward calls and grows only sublinearly with $T' \le T$. The larger the set of $\epsilon$-supported actions, the smaller is this regret term.
\paragraph{Remark 2.} Some offline model-based learning methods e.g., tensor decomposition \citep{anandkumar14b,Hong20} allow for $\epsilon, \delta$ to be arbitrarily small as the size of offline dataset increases. More simply, under linear reward models, we also estimate directly the reward parameters of actions in $\mathcal L$ as $\hat{\theta}_a = (\frac{1}{N_a}\sum_{i=1}^{N_a}x^{\mu}_i(x^{\mu}_i)^T)^{-1}(\frac{1}{N_a}\sum_{i=1}^{N_a}x^{\mu}_ir^{\mu}_i)$, where $N_a = \sum_{i=1}^{n} \mathbf{1}(a^{\mu}_i = a)$ is the number of data supporting action $a$ from dataset $S$. Using the assumption that contexts $\{x^{\mu}_i\}_{i=1}^n$ of $S$ are i.i.d samples from a distribution $P_x$, we can obtain the following proposition:
\begin{proposition}
For an action  $a \in \mathcal A$. The following result holds almost surely,
$$\text{lim}_{N_a \rightarrow \infty} (\frac{1}{N_a}\sum_{i=1}^{N_a}x^{\mu}_i(x^{\mu}_i)^T)^{-1}(\frac{1}{N_a}\sum_{i=1}^{N_a}x^{\mu}_ir^{\mu}_i) = \theta^*_a.$$
\label{prop1}
\end{proposition}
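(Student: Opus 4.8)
The plan is to recognize this as the standard strong-consistency statement for the ordinary least-squares estimator and to reduce it to the strong law of large numbers (SLLN) together with the continuous-mapping theorem. First I would re-index the data: since the triples $(x^{\mu}_i, a^{\mu}_i, r^{\mu}_i)$ are i.i.d., the sub-collection of those with $a^{\mu}_i = a$ is itself an i.i.d. sequence. Writing $(x_j, \eta_j)_{j \ge 1}$ for the contexts and noises attached to the $j$-th occurrence of action $a$, the $x_j$ are i.i.d. draws from the conditional law $P_{x \mid a}$ (proportional to $\mu(a\mid x)\,dP_x$), and each reward obeys $r_j = \langle x_j, \theta^*_a \rangle + \eta_j$ with $\eta_j$ conditionally mean-zero $\sigma$-subgaussian noise. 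Because $\mu(a\mid\cdot)$ is positive on a set of positive $P_x$-measure, $N_a \to \infty$ almost surely as $n \to \infty$, so the limit in the statement is exactly the limit along this infinite i.i.d. sequence and no stopping-time argument is needed.

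Next I would substitute the reward model into the estimator. Writing $\hat V_N = \tfrac1N \sum_{j=1}^N x_j x_j^T$ and using $r_j = x_j^T \theta^*_a + \eta_j$,
\begin{align*}
\hat\theta_a
&= \hat V_{N}^{-1}\Big(\tfrac1N\sum_{j=1}^N x_j r_j\Big)
= \hat V_{N}^{-1}\hat V_{N}\,\theta^*_a + \hat V_{N}^{-1}\Big(\tfrac1N\sum_{j=1}^N x_j \eta_j\Big)\\
&= \theta^*_a + \hat V_{N}^{-1}\Big(\tfrac1N\sum_{j=1}^N x_j \eta_j\Big).
\end{align*}
It therefore suffices to show the residual term vanishes almost surely. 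For the two averages I would invoke the SLLN. Since $\|x\| \le S_x$, the entries of $x x^T$ are bounded and hence integrable, so $\hat V_N \to \Sigma := \mathbb{E}_{x \sim P_{x\mid a}}[x x^T]$ almost surely. Likewise the $x_j \eta_j$ are i.i.d. with $\mathbb{E}[x\eta] = \mathbb{E}[x\,\mathbb{E}[\eta\mid x]] = 0$ and $\mathbb{E}\|x\eta\| \le S_x\,\mathbb{E}|\eta| < \infty$ (subgaussian noise is integrable), so $\tfrac1N\sum_{j=1}^N x_j \eta_j \to 0$ almost surely.

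Assuming the design is non-degenerate, i.e. $\Sigma \succ 0$ and hence invertible — which is exactly what makes the estimator well defined in the limit — matrix inversion and multiplication are continuous at $\Sigma$, so by the continuous-mapping theorem $\hat V_N^{-1}\big(\tfrac1N\sum_j x_j \eta_j\big) \to \Sigma^{-1}\cdot 0 = 0$ almost surely. Combining this with the display gives $\hat\theta_a \to \theta^*_a$ almost surely, as claimed. The only real obstacle is the invertibility of the limiting second-moment matrix $\Sigma$: without $\Sigma \succ 0$ the inverse need not be continuous at the limit and the conclusion can fail. I would either state this non-degeneracy as a standing assumption on $P_{x\mid a}$ (the action-$a$ contexts span $\mathbb{R}^d$) or note that it is implicit in the estimator being well posed for large $N_a$; everything else is a direct application of the SLLN and continuous mapping, requiring only the boundedness $\|x\| \le S_x$ and the integrability of the subgaussian noise.
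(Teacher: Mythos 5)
Your proof is correct, but it takes a different route from the paper. The paper does not give a self-contained argument at all: it simply observes that the contexts in the dataset are i.i.d.\ and defers to the proof of Proposition 1 in \citet{tennenholtz2020} (a result for confounded bandits where only $M < d$ features are observed), applied in the special case $M = d$. Your argument is the direct one: re-index to the i.i.d.\ subsequence of rounds where action $a$ is taken, substitute $r_j = \langle x_j, \theta^*_a\rangle + \eta_j$ to isolate the error term $\hat V_N^{-1}\bigl(\frac{1}{N}\sum_j x_j\eta_j\bigr)$, kill both averages by the strong law of large numbers, and finish with the continuous-mapping theorem at the limit $\Sigma = \mathbb{E}[xx^T \mid a]$. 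What your version buys is transparency and an honest accounting of hypotheses: in particular, you correctly identify that the conclusion requires the conditional second-moment matrix $\Sigma$ to be invertible (equivalently, that the contexts on which action $a$ is logged span $\mathbb{R}^d$), a non-degeneracy condition that the paper never states and which is genuinely necessary --- without it the inverse in the estimator is ill-posed in the limit and the proposition can fail. What the paper's citation-based route buys is brevity and, in principle, inheritance of whatever weaker conditions the external result establishes, at the cost of hiding exactly this kind of assumption from the reader. Your handling of the measure-theoretic subtleties (the selected subsequence being i.i.d.\ from the conditional law $P_{x\mid a}$, and $N_a \to \infty$ almost surely so that no stopping-time argument is needed) is also sound.
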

If we choose $\epsilon = \text{max}_{a \in \mathcal L}|\hat{\theta_a} - \theta^*_a|$ the Proposition \ref{prop1} implies that as $N_a$ (the size of $\mathcal L$) increases, $\epsilon$ can be arbitrarily small.

If $T-T' \le \mathcal O(1/(S_x^2 \epsilon^2))$, the first term of the upper bound is dominated by the second term. Thus, $R_T = \mathcal O(\sqrt{(K-L)T'}) \le \mathcal O(\sqrt{KT})$. In this case, Algorithm \ref{eps-mOFUL} improves Algorithm \ref{mOFUL} in the three aspects: online explorations, the regret, and the computation at optimization steps. In extreme cases, when $L = 0$, algorithm \ref{eps-mOFUL} degenerates to algorithm \ref{mOFUL} and when $L = K$ which implies that the condition at line 5 is unsatisfied for every $t$ and thus no reward call is needed. In this case, if $T -T' \le \mathcal O(1/(S_x^2 \epsilon^2))$, the regret of algorithm \ref{eps-mOFUL} is $R_T = \mathcal O(\sqrt{T}/ S_x)$ which is sublinear in $T$.
\paragraph{Remark 3.} In practice, the set $\mathcal L$ may be built by for example choosing $L$ actions with the maximum values of $N_a = \sum_{i=1}^{n} \mathbf{1}(a^{\mu}_i = a)$ which is the number of data supporting action $a$ from dataset $S$. Then $\epsilon$ is determined as $\epsilon = \text{max}_{a \in \mathcal L} |\hat{\theta}_a - \theta^*_a|$. By this way, we can guarantee that all actions in $\mathcal L$ are $\epsilon$-supported actions. We note that $\epsilon$ is not required to be known for Algorithm \ref{eps-mOFUL}, but only participates in the upper bound of $R_T$ of Algorithm \ref{eps-mOFUL}. The value of $L$ can be selected based on the histogram of actions from $S$ showing $N_a$ for each action $a$. If plotted in descending order, $L$ is the value at which $N_a$ starts to diminish fast.
\subsection{$\epsilon$-mOFUL-IPS Algorithm}
Algorithm \ref{eps-mOFUL} makes use of availability of the $\epsilon$-supported actions to reduce the online explorations. In this section, we propose another new policy learning algorithm which can obtain further reduce online explorations while keeping a sublinear regret even without the existence of such $\epsilon$-supported actions. This algorithm exploits "good" context-action pairs from the offline dataset S.

We consider event $(x^{\mu}_i, a^{\mu}_i, r^{\mu}_i)$ in the offline dataset $S$. For a context $x^{\mu}_i$, a good learning algorithm should suggest some action $a$ such that the returned reward is at least $r^{\mu}_i$, else this  learnt policy  is worse than the logging policy. This observation suggests that if the estimated rewards for all actions by an algorithm is less than $r^{\mu}_i$, then the algorithm should use the action $a^{\mu}_i$ and $r^{\mu}_i$ of the logging policy $\mu$, instead of wasting a reward call.

Further, instead of comparing the estimated rewards for all actions against $r^{\mu}_i$ of $\mu$, we can compare them with a value higher than $r^{\mu}_i$ to further reduce the number of unnecessary reward calls.  This value is the estimated reward of a policy $\mu^+$, which is computed as $r^{\mu}_i\text{min}\{\frac{\pi^+(a^{\mu}_i|x^{\mu}_i)}{\mu(a^{\mu}_i|x^{\mu}_i)}, M\}$, and the policy $\mu^+$ is taken as an optimal policy in the restricted space $\Pi^+ \subseteq \Pi$ containing a set of policies $ \pi : \mathcal X \rightarrow \mathcal A$ where
\begin{equation*}
  \pi(a|x) =\begin{cases}
    0, & \text{if $a \in \mathcal U(x, \mu)$},\\
    > 0, & \text{if otherwise},
  \end{cases}
\end{equation*}
and $M$ is clipping constant of the off policy clipped IPS estimator \citep{Strehl10,swaminathan15a} which is defined as:
$$\hat{R}^M(\pi) = \frac{1}{|S|} \sum_{i=1}^{|S|}  r^{\mu}_i \text{min} \{\frac{\pi(a^{\mu}_i|x^{\mu}_i)}{\mu(a^{\mu}_i|x^{\mu}_i)}, M\}.$$
The policy space $\Pi^+$ contains a class of policies close to logging policy $\mu$ in the sense that they share the same supported actions (also same unsupported actions) as $\mu$. Thus, policies in $\Pi^+$ have full support from $\mu$ and it implies that we can find an optimal policy in $\Pi^+$ by using the clipped IPS estimator without online explorations: $\pi^+ = \text{argmax}_{\pi \in \Pi^+}\hat{R}^M(\pi)$. We refer to this algorithm as \emph{$\epsilon$-mOFUL-IPS} and describe it in Algorithm \ref{eps-mOFUL-IPS}. The comparison of estimated rewards by the algorithm and the estimated reward of the policy $\pi^+$ is performed at line 6. If contexts $x \not \in S$, the algorithm is similar to Algorithm \ref{eps-mOFUL} with confidence intervals updated by Eq(\ref{eq:12}). Following the above observations, to make use of contexts in $S$. If $|S| \ge T$, we use any $T$ contexts from $S$ as contexts of \ref{eps-mOFUL-IPS}. Otherwise, we use all $|S|$ contexts in $S$ and $T -|S|$ contexts are sampled randomly from $P_x$.
\begin{algorithm}[tb]
\caption{$\epsilon$-mOFUL-IPS algorithm}
\label{eps-mOFUL-IPS}
\textbf{Input}: offline dataset S, the set $\mathcal L$ of $\epsilon$-supported actions \\
\begin{algorithmic}[1]
\FOR{$t =1$ to $T$}
    \STATE receive context $x_t$
    \STATE $a_t, \tilde{\theta}_{t, a_t} = \text{argmax}_{a \in \mathcal A, \theta_a \in C_{t-1,a}} \langle x_t, \theta_a \rangle$
    \IF {$x_t \in S$ and assume $x_t$ corresponding to some $(x^{\mu}_i, a^{\mu}_i, r^{\mu}_i)$ of $S$}
        \IF{$\langle x_t, \tilde{\theta}_{t, a_t} \rangle > r^{\mu}_i\text{min}\{\frac{\pi^+(a^{\mu}_i|x^{\mu}_i)}{\mu(a^{\mu}_i|x^{\mu}_i)}, M\}$ and $a_t \in \mathcal A \setminus \mathcal L$}
             \STATE play action $a_t$ and receive $r_t$
             \STATE update $C_{t,a}$ by Eq(\ref{eq:12})
        \ENDIF
    \ELSE
        \IF{$a_t \in \mathcal A \setminus \mathcal L$}
             \STATE play action $a_t$ and receive $r_t$
             \STATE update $C_{t,a}$ by Eq(\ref{eq:12})
        \ENDIF
    \ENDIF
\ENDFOR
\end{algorithmic}
\end{algorithm}

We next provide a theoretical analysis for Algorithm \ref{eps-mOFUL-IPS}. Given logged dataset $S$ with size $n$, we define
$$u_{\pi}^i = r^{\mu}_i\text{min} \{ M, \frac{\pi(a^{\mu}_i|x^{\mu}_i)}{\mu(a^{\mu}_i|x^{\mu}_i)}\}, \text{ } \overline{u}_{\pi} = \frac{1}{n}\sum_{i=1}^{n}u^i_{\pi} \text{ , } Var(\pi) = \sum_{i=1}^{n} (u^i_{\pi} - \overline{u}_h)^2/(n-1),$$
and a reward function class $\mathcal F_{\Pi} = \{f_{\pi}: \mathcal X \times \mathcal A \rightarrow [0,1] \}$. The covering number $N(1/n, \mathcal F_{\Pi}, 2n)$ is the size of the smallest cardinality subset $A_0 \in \mathcal F_{\Pi}$ such that $\mathcal F_{\Pi}$ is contained in the union of balls of radius $1/n$. We get the following upper bound for $R_T$.
\begin{theorem}
Assume that $||x|| \le S_x$. For any $\epsilon \ge 0$ and $T > 0$, with probability at least $1- \delta$, the $\epsilon$-mOFUL-IPS algorithm achieves an upper bound on the regret
$$R_T \le \mathcal O(\epsilon S_x(T-T'-T'') + \sqrt{(K-L)T'} + \sqrt{Var(\pi^+)log(N(1/T'', \mathcal F_{\Pi}, 2T'')/\delta)T''}),$$
where $T' = |\{  1 \le t \le T: \langle x_t, \tilde{\theta}_{t, a_t} \rangle > r^{\mu}_i\text{min}\{\frac{\pi^+(a^{\mu}_i|x^{\mu}_i)}{\mu(a^{\mu}_i|x^{\mu}_i)}, M\} \text{ and } a_t \in \mathcal A \setminus \mathcal L \}|$; $T'' = |\{  1 \le t \le T: \langle x_t, \tilde{\theta}_{t, a_t} \rangle \le r^{\mu}_i\text{min}\{\frac{\pi^+(a^{\mu}_i|x^{\mu}_i)}{\mu(a^{\mu}_i|x^{\mu}_i)}, M\} \}|$; $Var(\pi^+)$ and $N(1/T, \mathcal F_{\Pi}, 2T)$ are defined as above;
\label{theorem3}
\end{theorem}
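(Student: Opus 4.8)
The plan is to decompose the cumulative regret according to which branch of Algorithm~\ref{eps-mOFUL-IPS} fires at each round, and to bound each group of rounds so that the three summands reproduce the three terms of the claimed bound. Partition $\{1,\dots,T\}$ into three index sets: $\mathcal T_{\mathcal L}$, the rounds on which the optimistic action $a_t$ lies in $\mathcal L$ and is played using the offline-learned parameter $\hat\theta_{a_t}$ (no reward call); $\mathcal T'$, the online-exploration rounds on which $a_t\in\mathcal A\setminus\mathcal L$ and the estimated reward exceeds the clipped-IPS threshold; and $\mathcal T''$, the rounds on which the estimated reward falls below the threshold and the algorithm defers to $\pi^+$. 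These are disjoint and cover every round, so $|\mathcal T_{\mathcal L}|=T-T'-T''$ and $R_T=\sum_{t\in\mathcal T_{\mathcal L}}\rho_t+\sum_{t\in\mathcal T'}\rho_t+\sum_{t\in\mathcal T''}\rho_t$ with instantaneous regret $\rho_t=\langle x_t,\theta^*_{\pi^*(x_t)}\rangle-\langle x_t,\theta^*_{a_t}\rangle$. For $t\in\mathcal T_{\mathcal L}$ the confidence set is the singleton $\{\hat\theta_{a_t}\}$ with $\|\hat\theta_{a_t}-\theta^*_{a_t}\|\le\epsilon$, so optimism holds up to an additive slack $\epsilon S_x$: combining $\langle x_t,\tilde\theta_{t,a_t}\rangle\ge\langle x_t,\theta^*_{\pi^*(x_t)}\rangle-\epsilon S_x$ (every true parameter lies in its confidence set, exactly for non-$\mathcal L$ actions and up to $\epsilon S_x$ for $\mathcal L$ actions) with $\langle x_t,\tilde\theta_{t,a_t}\rangle=\langle x_t,\hat\theta_{a_t}\rangle\le\langle x_t,\theta^*_{a_t}\rangle+\epsilon S_x$ gives $\rho_t\le 2\epsilon S_x$, exactly as in Theorem~\ref{therem2}. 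Summing over $\mathcal T_{\mathcal L}$ yields the first term $\mathcal O(\epsilon S_x(T-T'-T''))$.

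Next I would treat the online-exploration rounds $\mathcal T'$. For the $K-L$ actions in $\mathcal A\setminus\mathcal L$ the honest OFUL confidence sets $C_{t,a}$ of Eq.(\ref{eq:12}) are used, and these sets only shrink on rounds in $\mathcal T'$. I would therefore apply the self-normalized tail bound together with the elliptical-potential (determinant--trace) lemma restricted to the subsequence $\mathcal T'$, precisely as in the proof of the mOFUL regret bound, so that the sum of confidence widths over $\mathcal T'$ is $\mathcal O(\sqrt{(K-L)T'\,d\log(\cdot)})$; a Cauchy--Schwarz step over the $K-L$ disjoint per-action designs turns $\sum_a\sqrt{T_a(\cdot)}$ into $\sqrt{(K-L)\sum_a T_a(\cdot)}=\sqrt{(K-L)T'(\cdot)}$ and gives the second term $\tilde{\mathcal O}(\sqrt{(K-L)T'})$. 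The only point to verify is that the union bound baked into $\beta_t(\delta)$, with the $(K-L)$ factor inside the logarithm, keeps every $\theta^*_a$ with $a\notin\mathcal L$ inside its confidence set uniformly in $t$.

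The hard part, and the step I expect to be the main obstacle, is the defer-to-$\pi^+$ rounds $\mathcal T''$. Here the action actually played is $a^\mu_i\sim\pi^+$ while the branch is entered exactly when $\langle x_t,\tilde\theta_{t,a_t}\rangle\le u^i_{\pi^+}$. By (approximate) optimism, $\langle x_t,\theta^*_{\pi^*(x_t)}\rangle\le\langle x_t,\tilde\theta_{t,a_t}\rangle+\mathcal O(\epsilon S_x)\le u^i_{\pi^+}+\mathcal O(\epsilon S_x)$, so after summing the $\mathcal T''$ contribution reduces to comparing $\sum_{t\in\mathcal T''}u^i_{\pi^+}$ with the true cumulative reward of $\pi^+$, the $\epsilon S_x$ slack being absorbed into the first term. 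The remaining task is to control the deviation of the clipped-IPS surrogate $\frac{1}{T''}\sum u^i_{\pi^+}$ from the true value $R(\pi^+)$ \emph{uniformly} over $\Pi^+$, since $\pi^+$ is itself the empirical clipped-IPS maximizer. For this I would invoke a sample-variance-penalized (empirical Bernstein) tail bound combined with a covering-number union bound over $\mathcal F_{\Pi}$ at scale $1/T''$ --- the counterfactual-risk-minimization concentration of \citet{swaminathan15a} --- giving a per-round estimation error of order $\sqrt{Var(\pi^+)\log(N(1/T'',\mathcal F_{\Pi},2T'')/\delta)/T''}$; multiplying by the $T''$ rounds produces the third term $\mathcal O(\sqrt{Var(\pi^+)\log(N(1/T'',\mathcal F_{\Pi},2T'')/\delta)\,T''})$. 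The delicate points are that the clipping constant $M$ keeps $u^i_{\pi^+}$ bounded so that the Bernstein inequality applies, that the covering number is what controls the supremum over the continuous policy class, and that the clipping bias points in the favorable direction --- which is exactly where restricting to the full-support class $\Pi^+$ (so that $\pi^+$ has full support from $\mu$) is essential.

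Finally, adding the three bounds and folding logarithmic factors into the $\mathcal O(\cdot)$ notation yields the stated regret. The inequality holds on the intersection of the high-probability events for the confidence sets (Step~2/Step~3) and for the IPS concentration (Step~4), whose total failure probability is kept below $\delta$ by a union bound over the $K-L$ online actions and the covering set of $\mathcal F_{\Pi}$.
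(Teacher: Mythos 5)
Your overall strategy is the same as the paper's: split the $T$ rounds according to which branch of Algorithm~\ref{eps-mOFUL-IPS} fires, bound the $\mathcal L$-rounds by an $\mathcal O(\epsilon S_x)$ per-round slack exactly as in Theorem~\ref{therem2}, run the OFUL self-normalized/elliptical-potential argument on the exploration subsequence to get $\mathcal O(\sqrt{(K-L)T'})$, and control the deferral rounds by the clipped-IPS machinery (downward clipping bias in the spirit of \citet{Strehl10} plus the covering-number/variance generalization bound of \citet{swaminathan15a}). The only structural difference is that the paper splits the deferral rounds into two index sets ($B_1$, where the clipped-IPS surrogate overestimates the true value of $\pi^+$, handled by Hoeffding plus the downward bias of clipping; and $B_4$, where it underestimates, handled by the one-sided CRM bound), whereas you fold both directions into a single two-sided deviation bound; since you explicitly name both ingredients, that merge is harmless.

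There is, however, one concrete gap. On the deferral rounds $\mathcal T''$ you invoke \emph{approximate} optimism, $\langle x_t,\theta^*_{\pi^*(x_t)}\rangle \le \langle x_t,\tilde\theta_{t,a_t}\rangle + \mathcal O(\epsilon S_x) \le u^i_{\pi^+} + \mathcal O(\epsilon S_x)$, and then assert that the accumulated slack $\mathcal O(\epsilon S_x T'')$ is "absorbed into the first term." It cannot be: the first term of Theorem~\ref{theorem3} is $\epsilon S_x(T-T'-T'')$, which by construction excludes the $\mathcal T''$ rounds, so your route actually proves the weaker bound $\mathcal O(\epsilon S_x(T-T') + \sqrt{(K-L)T'} + \sqrt{Var(\pi^+)\log(N(1/T'',\mathcal F_\Pi,2T'')/\delta)\,T''})$. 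The paper avoids this slack by a further case analysis you skip: it introduces $\pi^*_\mu(x) \in \mathrm{argmax}_{a\in\mathcal U(x,\mu)}\langle x,\theta^*_a\rangle$, the best \emph{unsupported} action. If $\pi^*(x_t)=\pi^*_\mu(x_t)$, the optimal action is unsupported, hence not in $\mathcal L$, and exact (slack-free) optimism applies to it; if $\pi^*(x_t)\ne\pi^*_\mu(x_t)$, the optimal action is supported and the paper identifies its value with that of $\pi^+$, producing either zero regret or a pure clipped-IPS deviation term with no $\epsilon$ contribution. (That identification implicitly treats $\pi^+$ as the exact optimal policy over the supported class rather than the empirical maximizer — an idealization your argument does not need — but it is what yields the $T-T'-T''$ coefficient in the stated theorem.) To recover the theorem as stated you would need to add this case split, or else restate the first term as $\epsilon S_x(T-T')$.
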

\paragraph{Remark 4.} Following \citep{zhou2018offline}, $\sqrt{Var(\pi^+)log(N(1/T, \mathcal F_{\Pi}, 2T)/\delta)T})$ is sublinear in $T$. If $T-T'-T'' \le \mathcal O(1/(S_X^2 \epsilon^2))$, the first term of Theorem \ref{theorem3} is dominated, hence the regret of the $\epsilon$-mOFUL-IPS algorithm achieves a sublinear rate in $T$. In Theorem \ref{theorem3}, $T'$ denotes the number of reward calls up to $T$ iterations of the Algorithm \ref{eps-mOFUL-IPS}. Compared to the number of reward calls of the Algorithm \ref{eps-mOFUL}, the Algorithm \ref{eps-mOFUL-IPS}'s one is potentially smaller.
\paragraph{Remark 5.} If $L = 0$ i.e., there is no availability of $\epsilon$-supported actions, then
$T = T' + T''$ which implies that the first term is removed. The algorithm still guarantees a sublinear rate.
\vspace*{-4mm}
\section{Experiments}
\begin{figure*}[ht]
\centering
\subfigure[$n^{UA} = 0.2$, $L = 5$]{\includegraphics[scale=1.0,width=.32\textwidth, height= .13\textheight]{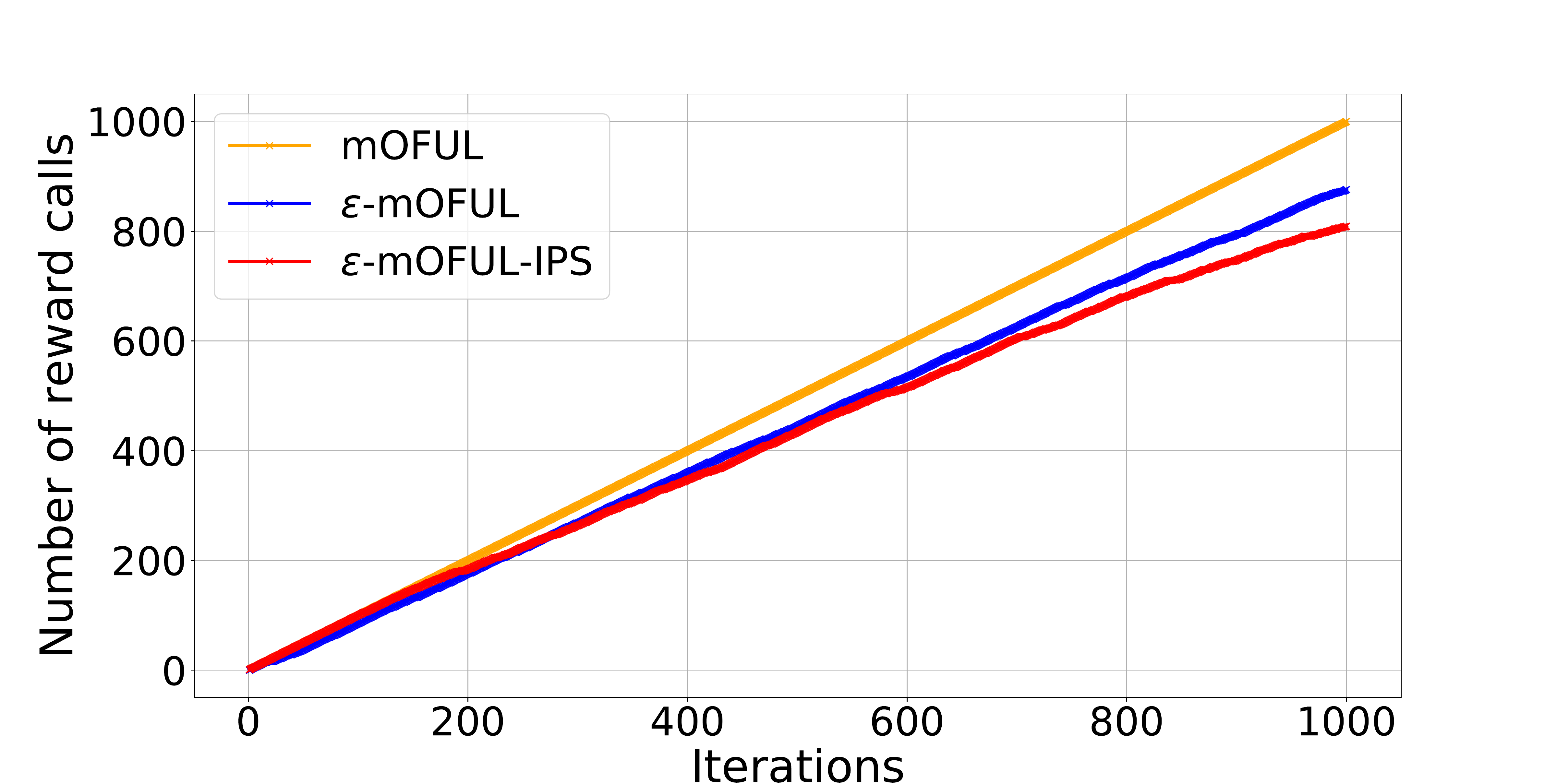}
}\hfill
\subfigure[$n^{UA} = 0.2$, $L = 10$]{\includegraphics[scale=1.0,width=.32\textwidth, height= .13\textheight]{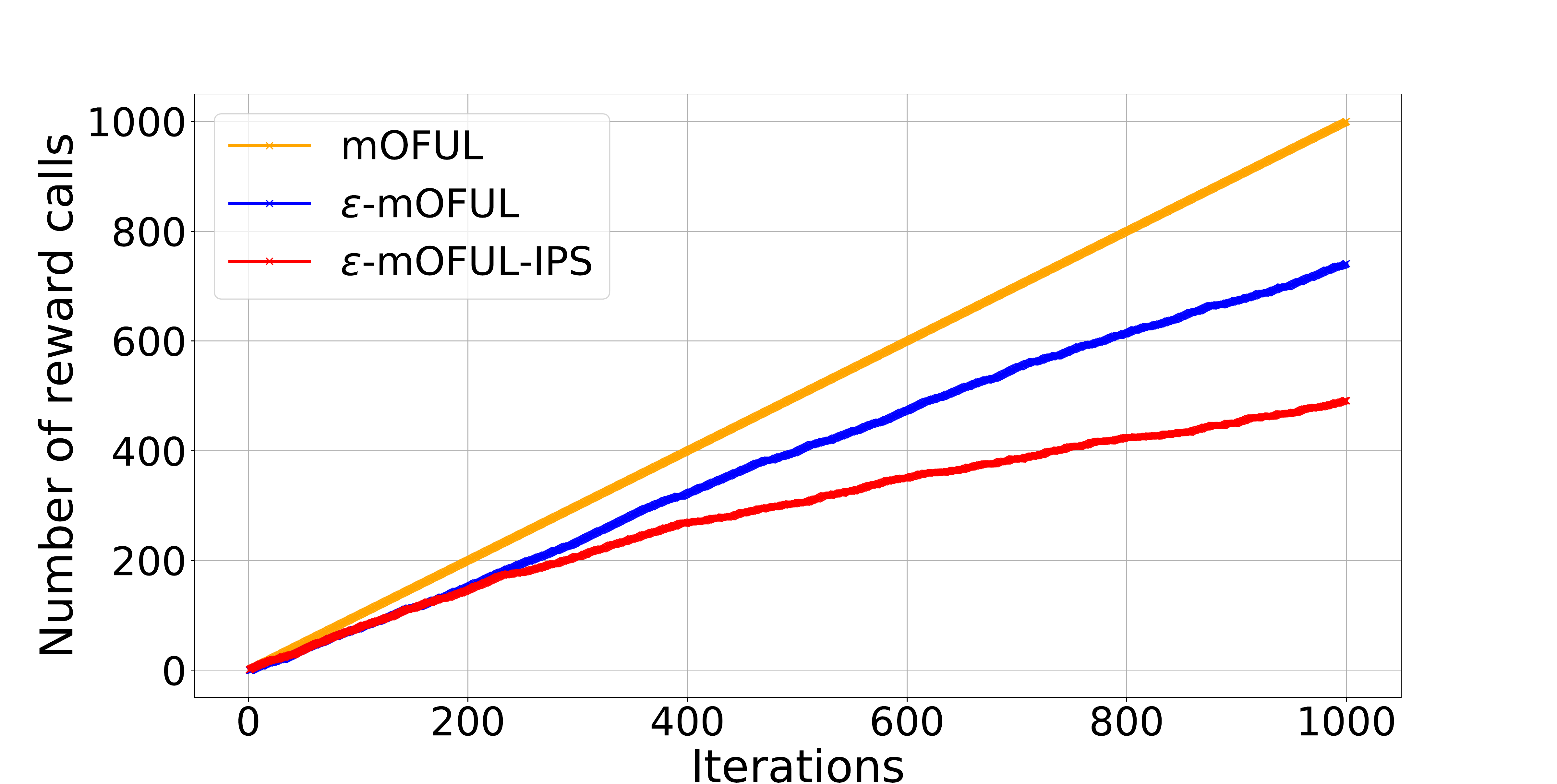}
}\hfill
\subfigure[$n^{UA} = 0.2$, $L = 12$]{\includegraphics[scale=1.0,width=.32\textwidth, height= .13\textheight]{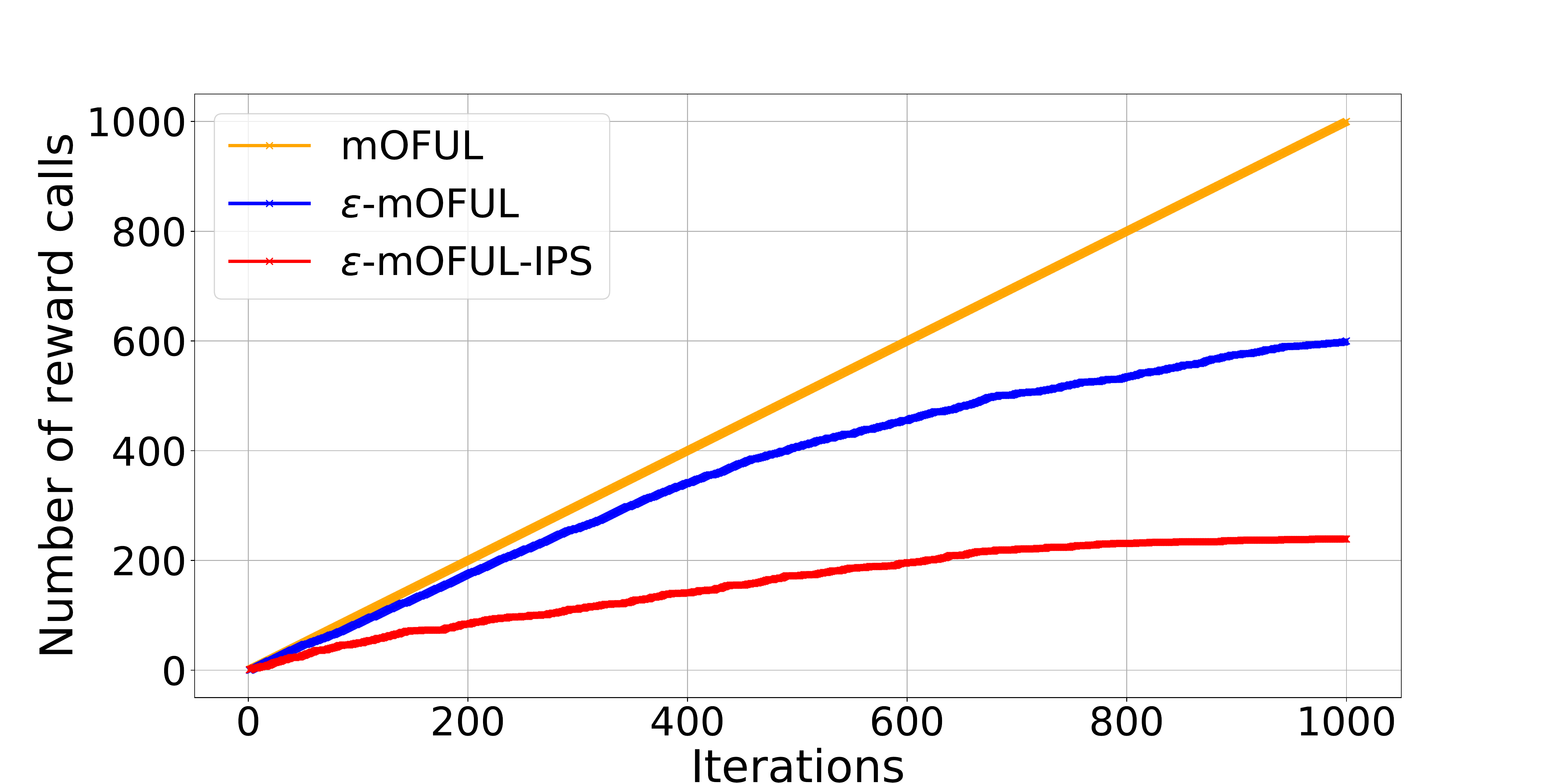}
}
\vspace*{-3mm}
\caption{The results for a fixed number of average unsupported action $n^{UA}$ and varying number of supported actions L. Our proposed offline-online methods are better than fully online method and the reward call saving increases with increasing $L$.}
\label{fig1}
\vspace*{-3mm}
\end{figure*}
We empirically compare the performance of our two algorithms $\epsilon$-mOFUL and  $\epsilon$-mOFUL-IPS to two baselines: (1) the fully online learning algorithm, mOFUL, as described in Algorithm 1; (2) the fully offline learning algorithm based on policy restriction in \citep{Sachdeva2020}, denoted by OPR.

\paragraph{Support Deficiency Setting}
Let $n^{UA} = \mathbb{E}_{x \in \mathcal X}[\frac{|\mathcal U(x, \mu)|}{K}]$ 
be the average number of unsupported actions of logging policy $\mu$ given a set of context $\mathcal X$ and a set of actions $\mathcal A = \{1, 2,.., K\}$. To create a logging policy $\mu$ at some level, for example $ n^{UA} = 0.5$, we do the following: for each $x$, we randomly select a set of  $50\%$ of actions in $\mathcal A$. We consider this  set as $\mathcal U(x, \mu)$, and create the logged data by considering the actions in the complement of this set. To build a data set from the logging policy $\mu$, for each $x$, we select uniformly at random an action $a \in \mathcal U^c(x, \mu)$ which is in the set of supported actions for $x$. In this manner, for each $a \in \mathcal U(x, \mu)$: $\mu(a |x) = 0$ and for each $a \in \mathcal U^c(x, \mu)$: $\mu(a |x) = \frac{2}{K}$.
\vspace*{-3mm}
\subsection{Synthetic Experiments}
\begin{wrapfigure}{r}{0.4\textwidth}
\begin{center}
\vspace*{-10mm}
\centerline{\includegraphics[scale=1.0,width=.4\textwidth]{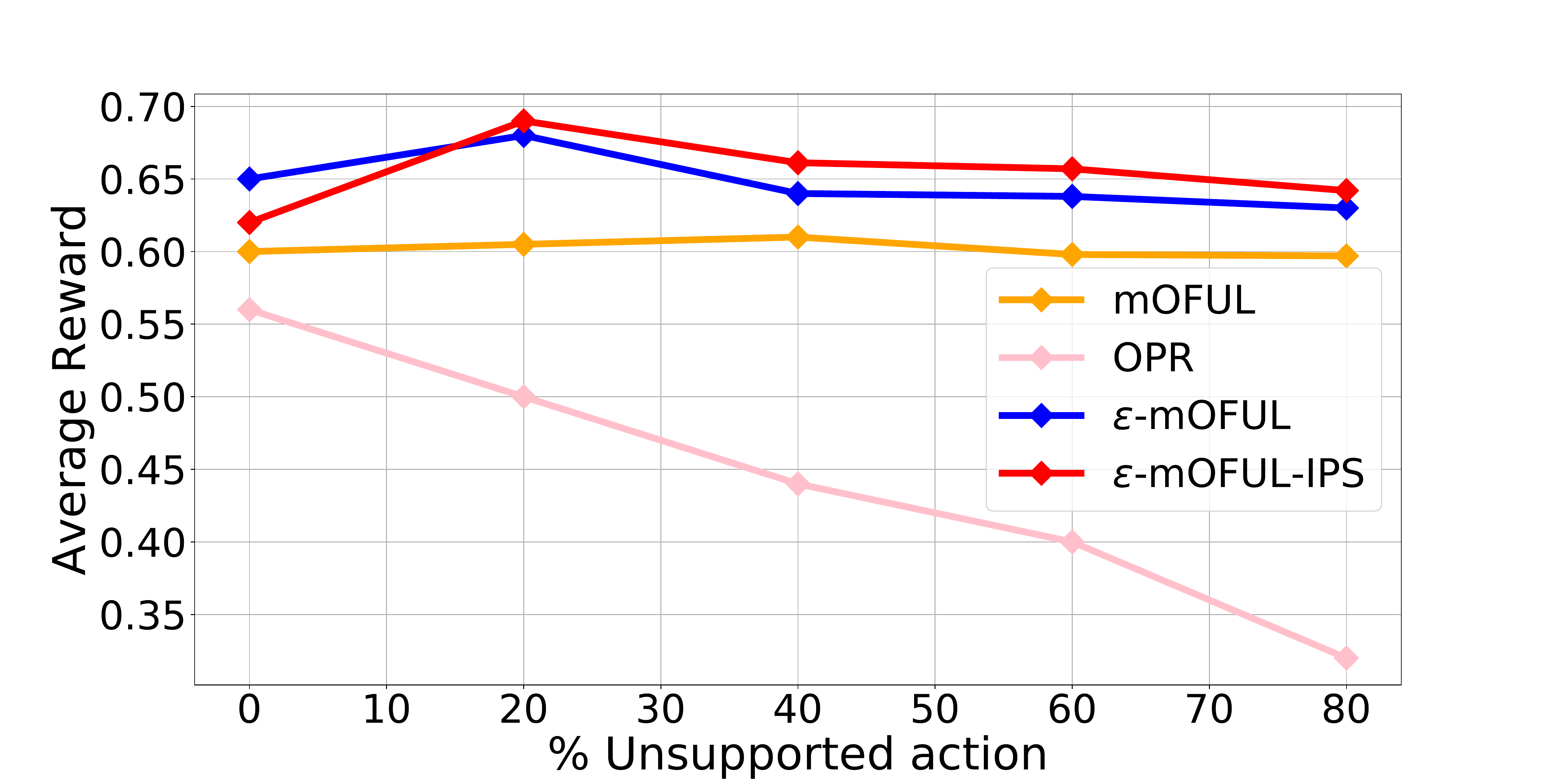}}
\caption{The average reward of different learning algorithms vs. $\%$ of unsupported actions, $n^{UA}$ for logging policy $\mu$. We performed experiments for $n^{UA} = [0, 0.2, 0.4, 0.6, 0.8]$.}
\label{fig3}
\end{center}
\vspace*{-8mm}
\end{wrapfigure}
We first experiment with synthetic contextual bandits with $|\mathcal A| = 20$. We use a linear reward model: $r(x,a) = \langle x, \theta^{*}_a \rangle + \eta$, where $\theta^{*}_a$ is the parameter for action $a$, drawn i.i.d from $\mathcal N(0, 1)$, and $\eta$ is the noise, drawn i.i.d from $\mathcal N(0, \sigma^2)$, where $\sigma = 2$. 
The contexts $x$ are sampled uniformly at random in $[0, 1]$. For $\epsilon$-mOFUL and $\epsilon$-mOFUL-IPS, we let $\epsilon = 0.05$. To build the set $\mathcal L$ of $\epsilon$-supported actions, we choose a set of any $L$ actions and sample $\hat{\theta}_a$ from $\mathcal N(\theta^*_a, \epsilon^2)$ for each action in $\mathcal L$.

We create a dataset for logging policy $\mu$ which is large enough to support each action in $\mathcal L$. OPR is learnt using a direct loss minimization (DLM) algorithm of \citet{Hazan10} as in \citep{Dud11}. For our proposed $\epsilon$-mOFUL-IPS, we learn $\pi^+$  via the linear programming (LP):
$\text{max}_{\pi \in \Pi} \sum_{(x, a, r) \in S} r_i \frac{\pi(a|x)}{\mu(a|x)} \text{ subject to } 0 \le \frac{\pi(a|x)}{\mu(a|x)} \le M,$
where the clipping constant $M$ is set to the ratio of the $90$-th percentile to the $10$-th percentile propensity score observed in the training set.

The efficiency of our proposed algorithms is measured via (1) the number of reward function calls, and (2) the estimated average reward. Lower the number of reward function calls, and higher the value of estimated average reward, the greater the efficiency of the algorithm is.
\subsubsection{On the number of online reward function calls}
We test the influence of $L$ and $n^{UA}$ on the number of reward function calls.
\paragraph{On the influence of $L$ parameter:} We fix $n^{UA} = 0.2$ and consider three different values of $L$: $L = 5, L = 10$ and $L = 12$. As shown in Section 5, all these algorithms converge sub-linearly. However, while mOFUL needs to use one reward call at each iteration, our algorithms uses fewer reward calls as $L$ increases. This is presented in Figure \ref{fig1}. Proposed $\epsilon$-mOFUL-IPS algorithm uses the least number of reward calls because it only makes a reward call if the chosen action does not belong to $\mathcal L$ and satisfies the condition at Line 6 in Algorithm 3. In particular, when $L = 12$, the number of reward calls is only about $20\%$ of mOFUL's calls.
\paragraph{On the influence of $n^{UA}$ parameter:} We provide results in Supplementary Material.
\subsubsection{On the efficiency of the algorithms}
We now fix the number of reward calls and compare the algorithms using the estimated average reward. Each algorithm provides a learnt policy. Given contexts $\{x_1, x_2,...,x_T\}$ and a policy $\pi$, for the algorithms using online explorations, we compute the average reward as $\sum_{i=1}^{T}\frac{1}{T} \langle x_i, \theta^{*}_{\pi (x_i)} \rangle$. For OPR, since the policy $\pi$ is stochastic, the average reward is computed as $\sum_{i=1}^{T}\frac{1}{T} \langle x_i, \theta^{*}_{\pi (a|x_i)} \rangle$, where action $a$ is chosen randomly from $\mathcal U(x_i, \pi)$. We compared estimated rewards of the algorithms with different values of $n^{UA} = 0; 0.2; 0.4; 0.6; 0.8$. We allow $T$ increasing but fix $L = 300$ across all values of $n^{UA}$.  The result is shown in Figure \ref{fig3}. The estimated reward of OPR is the worst across all $n^{UA}$. This is because OPR only searches optimal policies in a restricted subspace closed to the logging policy $\mu$, does not use online exploration, and thus fails to find optimal policies. The performance is worse for higher the level of unsupported actions.  In contrast, mOFUL, $\epsilon$-mOFUL and $\epsilon$-mOFUL-IPS  can find optimal policies for all the levels of  unsupported actions. mOFUL does not use offline data, the estimated rewards are nearly the same for all the levels of unsupported actions. $\epsilon$-mOFUL and $\epsilon$-mOFUL-IPS leverage a large amount of offline data to learn reward model and thus, are more efficient than mOFUL. $\epsilon$-mOFUL-IPS leverages both learnt reward model and an optimal policy $\pi^+$ of the subset of policy space generated from the supported actions of $\mu$, and runs well especially for small levels of  unsupported actions. In summary, our proposed $\epsilon$-mOFUL and especially $\epsilon$-mOFUL-IPS achieves better efficiency than mOFUL given the same reward calls.
\subsection{Multi-class Classification with Bandit Feedback}
\begin{figure*}[ht]
\centering
\subfigure[letter dataset]{\includegraphics[scale=1.0,width=.32\textwidth, height= .13\textheight]{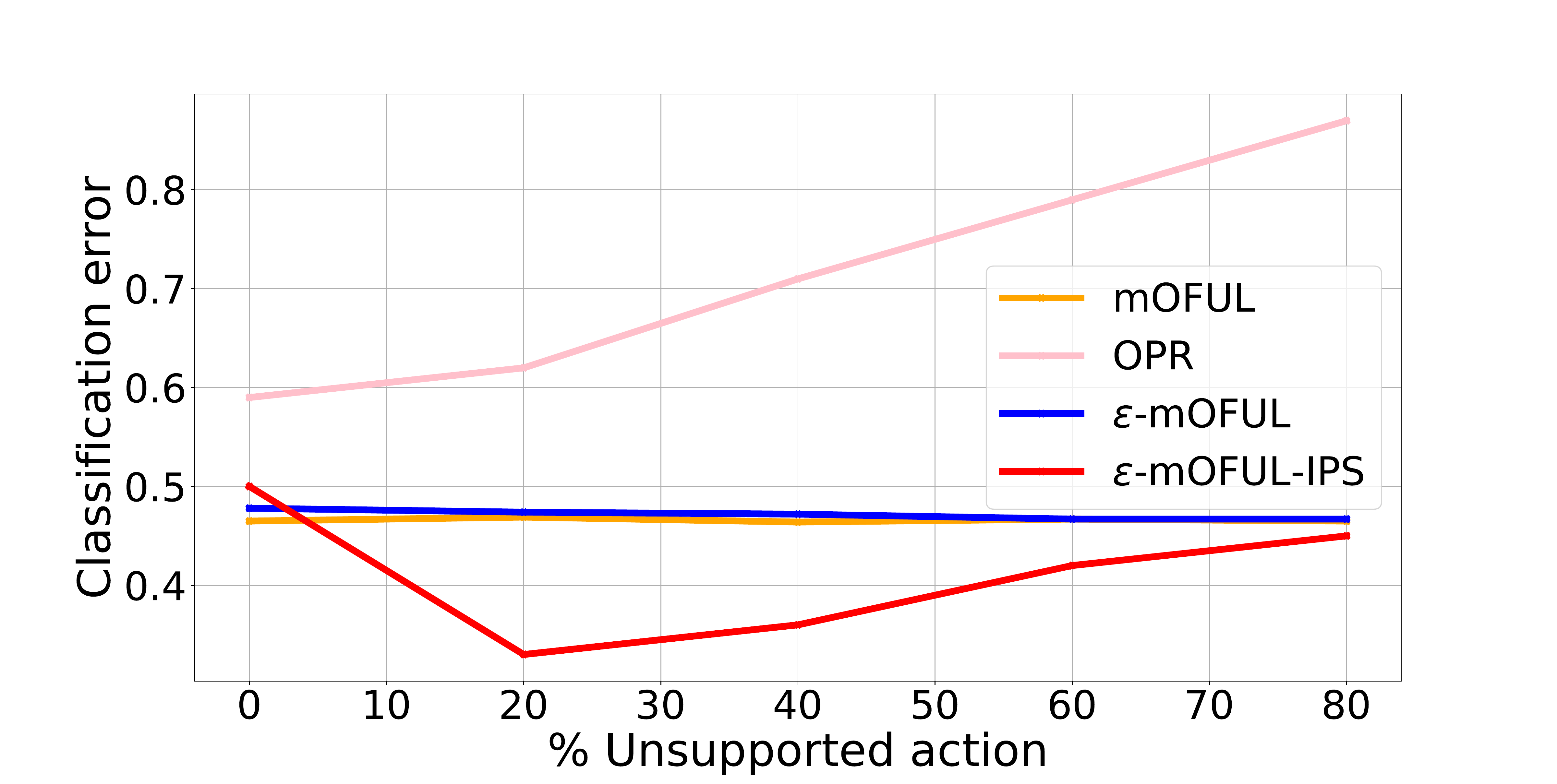}
}\hfill
\subfigure[pendigits dataset]{\includegraphics[scale=1.0,width=.32\textwidth, height= .13\textheight]{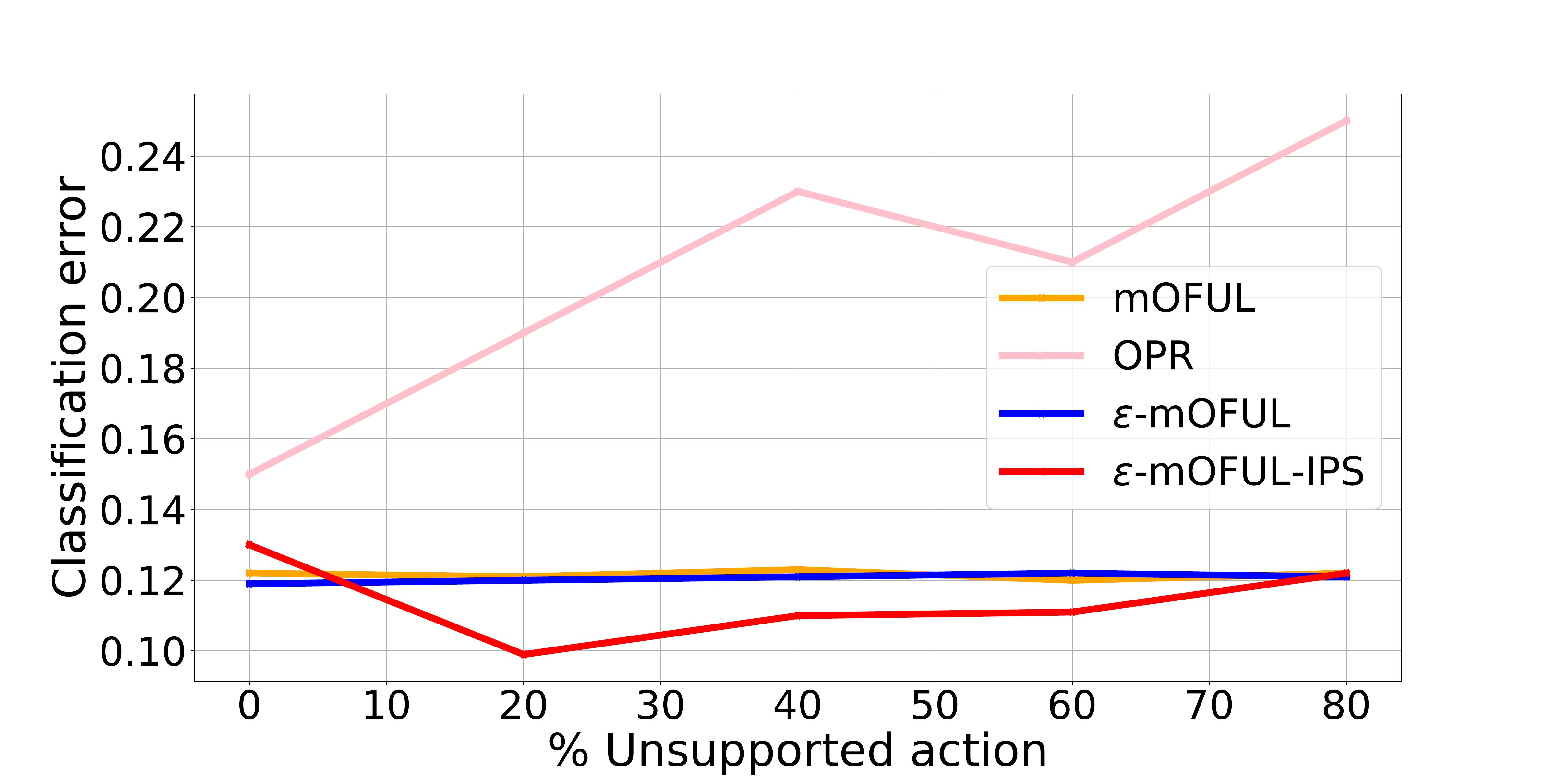}
}\hfill
\subfigure[satimage dataset]{\includegraphics[scale=1.0,width=.32\textwidth, height= .13\textheight]{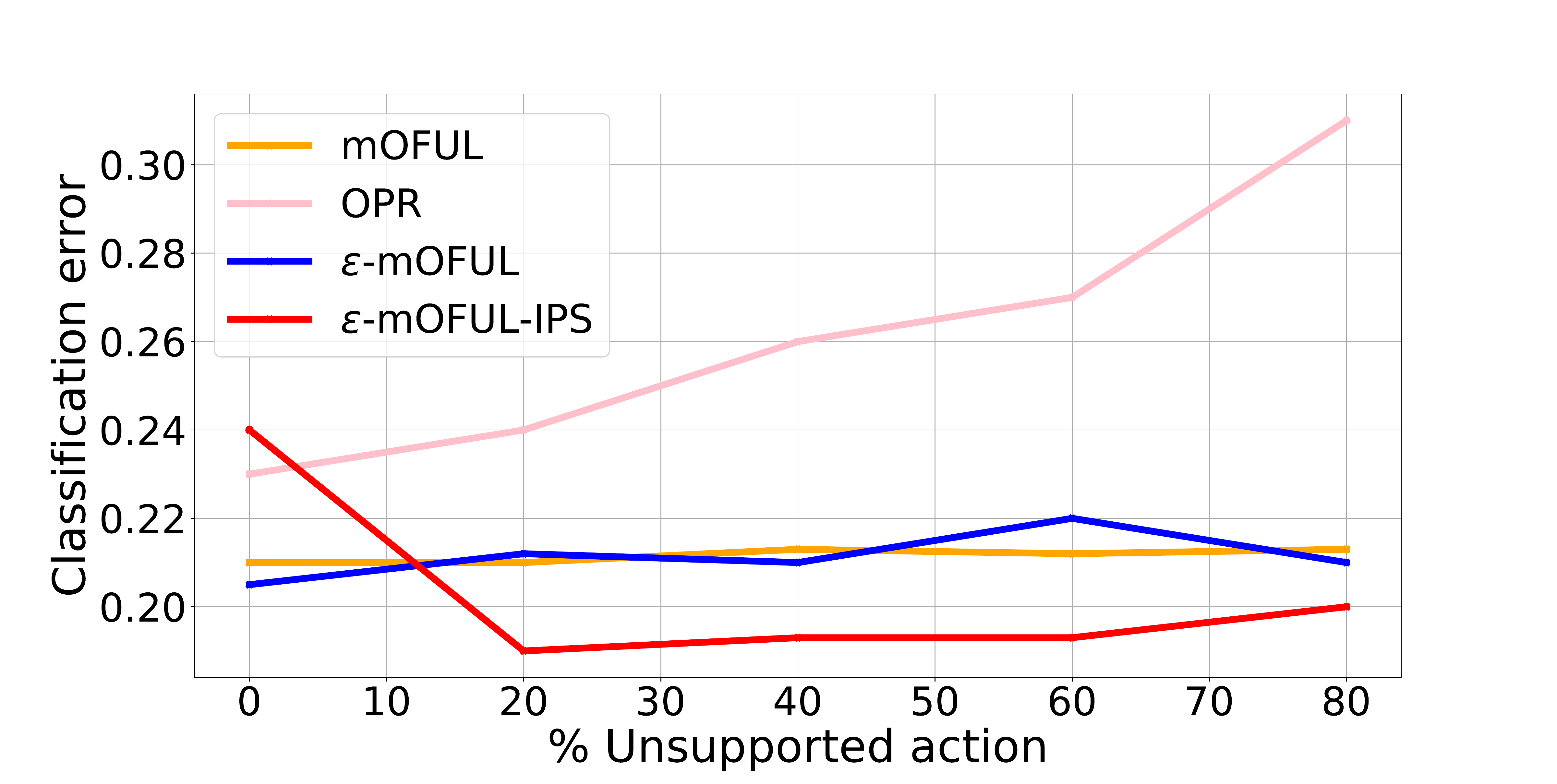}
}
\vspace*{-3mm}
\caption{Classification error for different algorithms as average number of the unsupported actions increases ($n^{UA} = [0, 0.2, 0.4, 0.6, 0.8]$).}
\label{fig4}
\vspace*{-3mm}
\end{figure*}

\begin{table}[ht]
\centering
\caption{Characteristics of benchmark datasets used in Section 5.2.}
\resizebox{8cm}{!}{%
\begin{tabular}[t]{lccc}
\hline Dataset  & letter & pendigits & satimage \\ \hline
Classes ($K$)   &  26    & 10        &   6       \\
Dataset size    & 20000  & 10992     & 6435       \\ \hline
\end{tabular}}
\label{table1}
\vspace*{-4mm}
\end{table}
Next, we learn optimal policies for classification tasks using the three UCI datasets previously considered for offline-policy learning in \citet{Dud11}. The datasets are described in Table \ref{table1}. We assume that data are drawn from a fixed distribution: $(x, c) \sim D$, where $x \in \mathcal X$ is a feature vector and $c \in \{1, 2, .., K\}$ is the class label. A typical goal is to find a classifier $\pi : \mathcal X \rightarrow \{1, 2,.., K\}$ minimizing the classification error: $e(\pi) = \mathbb{E}_{(x,a) \sim D} [\textbf{1}(\pi(x) \not = c)]$.

We convert the multi-class classification problem to contextual bandits by treating the labels as actions for a policy $\mu$, and recording the reward of 1 if the correct label is chosen, and 0 otherwise. We turn a data point $(x, c)$ into a classification example $(x, r_1, r_2,..., r_K)$, where $r_a = \textbf{1}(a \not = c)$ is the loss (reward) for predicting $a$. We note that only $r_a$ is revealed. In addition to this noiseless reward model, we also consider a noisy reward model for each data set, which reveals the correct reward with probability 0.5 and outputs a random coin toss otherwise. We use a linear loss model $r(x,a) = \langle x, \theta^{*}_a \rangle + \eta$, where $\theta^{*}_a$ is the unknown parameter of action $a$, and $\eta$ is the noise defined as above. We randomly split data into two sets, one ($70\%$) set to build the logging policy and ($30\%$) set to build algorithms requiring online explorations. The logging policy with different levels of unsupported actions as well as the algorithms are implemented as for the synthetic function except that contexts $x$ are obtained from datasets and the parameters $\theta_a^*$ of actions are assumed to be unknown. The set of $\epsilon$-supported actions is built as Remark 3 mentioned. We show this in Supplementary Material.

We test the performance of the algorithms in terms of the classification error for all three datasets. The results are shown in Figure \ref{fig4}. Similar to the synthetic experiments, our $\epsilon$-mOFUL-IPS algorithm achieves the best performance as its classification error is the lowest among all algorithms consistently on all datasets. OPR is the worst among all algorithms on all datasets. It is because OPR only searches optimal policies in a restricted subspace close to the logging policy $\mu$. This algorithm does not use online explorations and thus fails to find policies that are optimal in the whole policy space. The performance of $\epsilon$-mOFUL and mOFUL are nearly similar because the amount of data is not enough to learn the $\epsilon$-supported actions as well as in the synthetic experiments.
\section{Conclusion}
We study the problem of policy learning with logged data in contextual bandits with deficient support. Due to the deficient support, the policies learnt in the offline setting are biased.  To solve this problem, we combine the offline-policy learning with online explorations. We propose two algorithms. The first algorithm leverages reward models learned from offline data to reduce the number of online explorations. The second algorithm further improves the efficiency of the first algorithm to reduce the number of online interactions by exploiting good context-action pairs in offline data. We perform experiments with both synthetic and real datasets and show the efficiency of our algorithms.
\nocite{langley00}
\bibliographystyle{plainnat}
\bibliography{Hung-research}

\newpage
\appendix

\centerline{ \textbf{\huge Supplementary Material}}

\section{Additional Experiments}
\paragraph{On the influence of $n^{UA}$ parameter} We fix $L = 4$. We vary the average number of unsupported actions, $n^{UA}$ as $n^{UA} = 0.2, n^{UA} = 0.4$ and $n^{UA} = 0.6$. As shown in Figure \ref{fig2}, mOFUL always requires one reward call at each iteration. The efficiency of $\epsilon$-mOFUL does not change across the different values of $n^{UA}$ as the set of $\epsilon$- supported actions and the learnt parameter $\theta^+_a$ are fixed. As expected, when $n^{UA}$ decreases, the number of supported actions increases, therefore $\epsilon$-mOFUL-IPS requires fewer reward calls due to learnt policy $\pi^+$.

\begin{figure*}[ht]
\centering
\subfigure[$n^{UA} = 0.6$, $L = 4$]{\includegraphics[scale=1.0,width=.32\textwidth, height= .13\textheight]{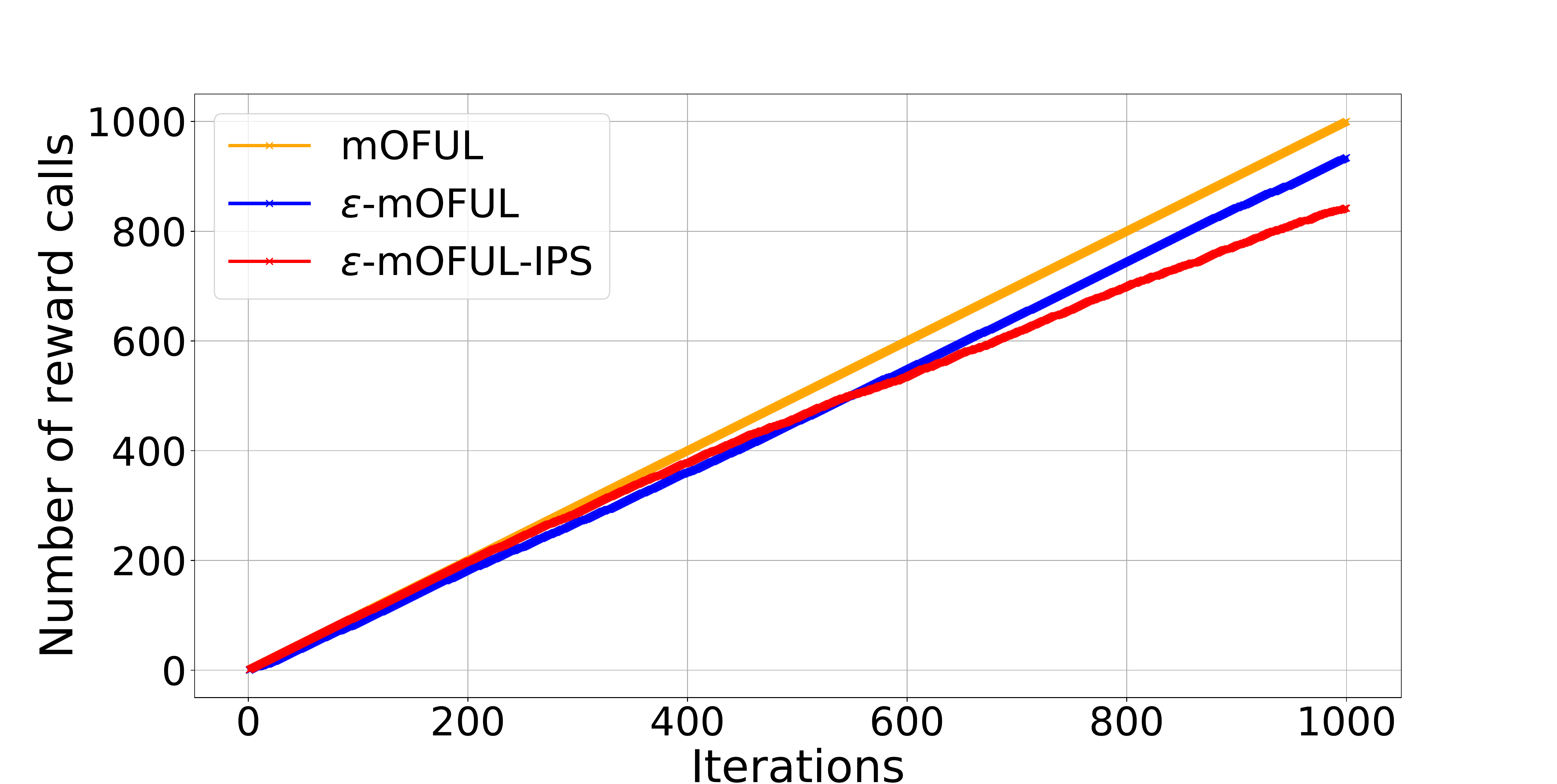}
}\hfill
\subfigure[$n^{UA} = 0.4$, $L = 4$]{\includegraphics[scale=1.0,width=.32\textwidth, height= .13\textheight]{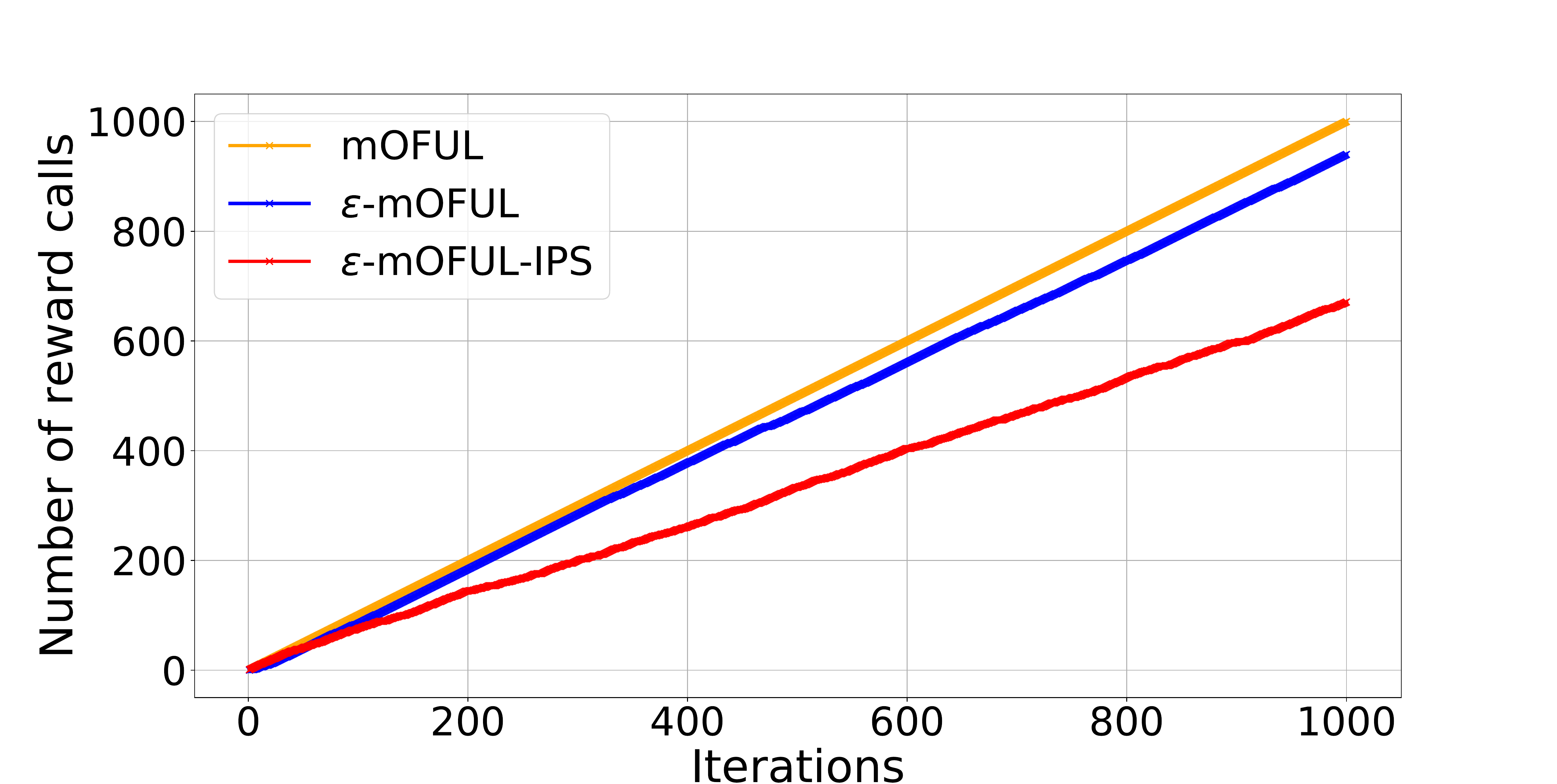}
}\hfill
\subfigure[$n^{UA} = 0.2$, $L = 4$]{\includegraphics[scale=1.0,width=.32\textwidth, height= .13\textheight]{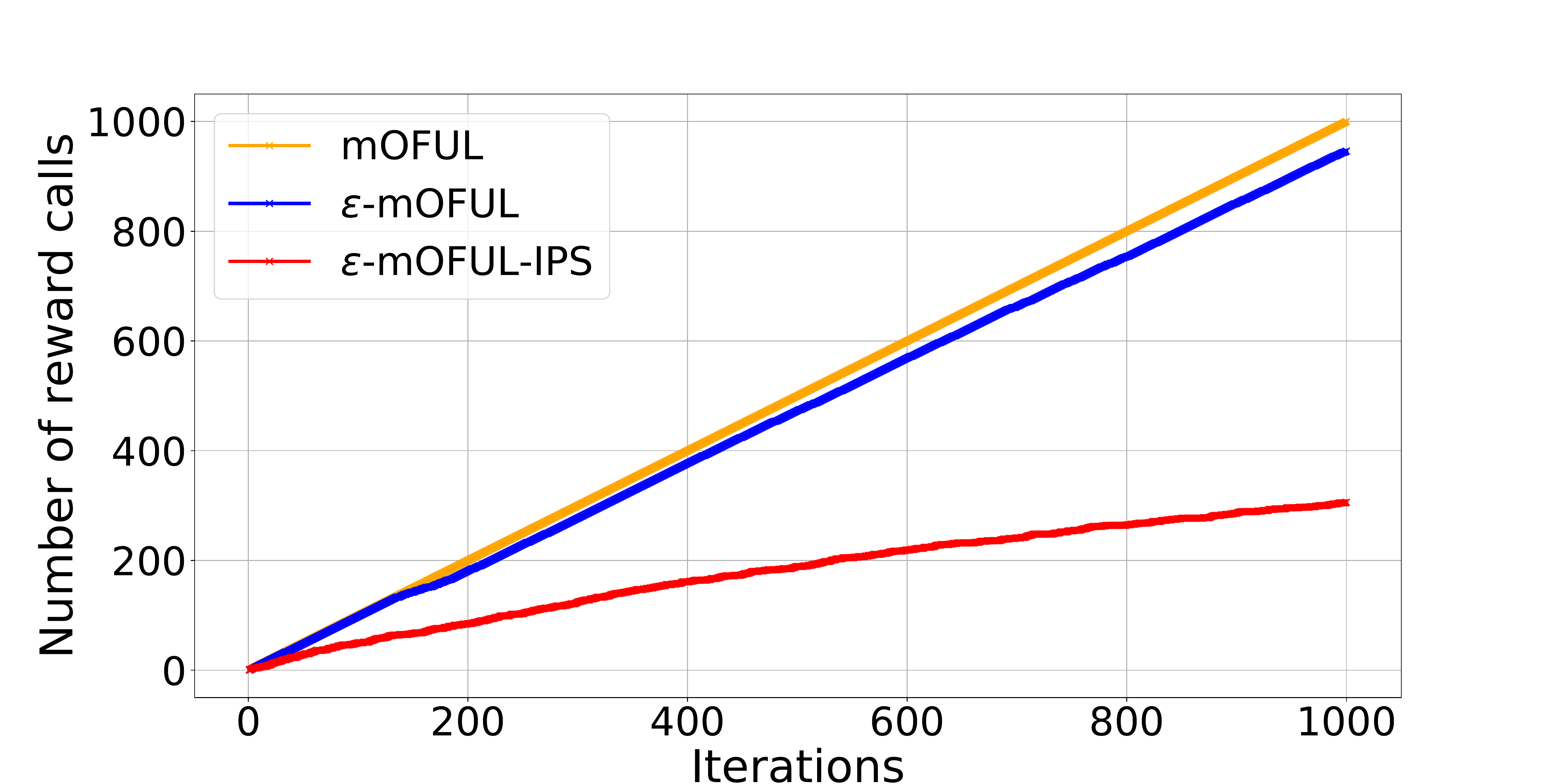}
}
\vspace*{-3mm}
\caption{A fixed $L$ and varying number of average unsupported actions $n^{UA}$. As $n^{UA}$ decreases, our proposed offline-online methods increasingly save the number of reward calls.}
\label{fig2}
\vspace*{-3mm}
\end{figure*}

\section{Proof of Theorem 1}
In this section, we provide a proof for Theorem 1. The proof technique is standard and similar to the one of \citep{Abbasi11}.

By adapting the proofs of \citep{Abbasi11}, we achieve the following results under our model.
\begin{lemma}[Based on Theorem 2 of \citep{Abbasi11}]
Let $V = I\lambda$, $\lambda > 0$ and assume that $||\theta^*_{a}|| \le S_{\theta}$ and $||x|| \le S_x$ for every $a \in \mathcal A$ and every $x \in mathcal X$. Then for any $\delta \in (0,1)$, with probability at least $1 -\delta$, for all $a$ and $t$, $\theta^*_{a}$ lies in the set
$$C_{t,a}= \{\theta \in \mathbb{R}: ||\hat{\theta}_{t,a} - \theta||_{\overline{V}_{t,a}} \le \beta_t \},$$
where $\overline{V}_{t,a} = \lambda I + \sum_{i=1}^{t} x_ix_i^T \mathbf{1}(a_i = a)$ and
$\sqrt{\beta_t(\delta)} = \sigma \sqrt{dlog(\frac{K-(1 + tS^2_x/\lambda)}{\delta})} + \lambda^{1/2}S_{\theta}$.
\end{lemma}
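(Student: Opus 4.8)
The plan is to reduce the multi-action problem to $K$ essentially independent single-parameter ridge-regression problems, invoke the self-normalized concentration bound of \citep{Abbasi11} on each, and then glue them together by a union bound over the actions. Because the reward model is \emph{disjoint}, the estimate $\hat{\theta}_{t,a}$ depends only on the rounds in which action $a$ was actually pulled, and $\overline{V}_{t,a}$ accumulates the outer products $x_i x_i^T$ only over those same rounds. Hence for each fixed $a$ one may restrict attention to the subsequence $\{i \le t : a_i = a\}$ and treat $(X_{t,a}, Y_{t,a})$ as the design and response of a standalone regularized least-squares problem with unknown parameter $\theta^*_a$.

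First I would fix $a$ and set up the filtration $\mathcal F_i = \sigma(x_1,a_1,r_1,\dots,x_i,a_i,r_i,x_{i+1},a_{i+1})$, so that the context $x_{i+1}$ and the chosen action $a_{i+1}$ are $\mathcal F_i$-measurable but the reward $r_{i+1}$ is not. Under the model $r_i = \langle x_i, \theta^*_{a_i}\rangle + \eta_i$ the noise $\eta_i$ is $\sigma$-subgaussian conditioned on $\mathcal F_{i-1}$, and crucially the selection event $\{a_i = a\}$ is $\mathcal F_{i-1}$-measurable, since the action is committed before the reward is revealed. Thus the noise restricted to $\{i : a_i = a\}$ remains an adapted, conditionally $\sigma$-subgaussian sequence, and the self-normalized tail inequality of \citep{Abbasi11} applies to $S_{t,a} = \sum_{i \le t,\, a_i = a} x_i \eta_i$. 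This yields, with probability at least $1-\delta'$ and simultaneously for all $t$,
$$\|\hat{\theta}_{t,a} - \theta^*_a\|_{\overline{V}_{t,a}} \le \sigma\sqrt{2\log(1/\delta') + \log\big(\det(\overline{V}_{t,a})/\det(\lambda I)\big)} + \lambda^{1/2}S_\theta,$$
where $\|\theta^*_a\| \le S_\theta$ controls the regularization-bias term. I would then bound the log-determinant term using $\|x\| \le S_x$: since $\overline{V}_{t,a}$ has trace at most $d\lambda + tS_x^2$, the AM--GM inequality on its eigenvalues gives $\det(\overline{V}_{t,a}) \le (\lambda + tS_x^2/d)^d$, so that $\log(\det(\overline{V}_{t,a})/\det(\lambda I)) \le d\log(1 + tS_x^2/(\lambda d)) \le d\log(1 + tS_x^2/\lambda)$.

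Finally, I would take a union bound over the $K$ actions with $\delta' = \delta/K$; this replaces $1/\delta'$ by $K/\delta$ inside the logarithm and, after absorbing the two logarithmic terms (for $d \ge 2$, $2\log(K/\delta) + d\log(1+tS_x^2/\lambda) \le d\log(K(1+tS_x^2/\lambda)/\delta)$), reproduces the stated radius $\sqrt{\beta_t(\delta)} = \sigma\sqrt{d\log(K(1+tS_x^2/\lambda)/\delta)} + \lambda^{1/2}S_\theta$, the factor $K$ in the logarithm being precisely the price of the union bound and the uniformity over $t$ coming for free from the anytime nature of the bound in \citep{Abbasi11}. The only genuinely delicate point is the second paragraph: one must verify that conditioning on the predictable selection $\{a_i = a\}$ preserves conditional subgaussianity, so that the self-normalized inequality legitimately applies to the adaptively selected action-$a$ subsequence rather than to a fixed non-adaptive design; the remaining determinant and union-bound manipulations are routine bookkeeping inherited directly from \citep{Abbasi11}.
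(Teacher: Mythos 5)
Your proposal is correct and matches the paper's intended argument: the paper gives no details beyond ``adapting the proofs of \citep{Abbasi11},'' and that adaptation is exactly your per-action reduction---the self-normalized bound applied to the predictably-masked subsequence $\{i \le t: a_i = a\}$ (valid because the event $\{a_i=a\}$ is measurable with respect to the past, as you note), the determinant--trace bound, and a union bound over the $K$ actions, which is precisely where the factor $K$ inside the logarithm originates. As a side benefit, your derivation confirms that the ``$K-(1+tS_x^2/\lambda)$'' in the stated radius is a typo for $K(1+tS_x^2/\lambda)$, consistent with the $(K-L)(1+tS_x^2/\lambda)$ appearing in Eq.~(\ref{eq:12}).
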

\begin{lemma}[Based on Lemma 11 of  \citep{Abbasi11}]
Let $\{x_t\}_{t =1}^{\infty}$ be a sequence in $\mathbb{R}^d$ and $V_t = V + \sum_{i=1}^{t}x_ix_i^T$. Assume that $||x_t|| \le S_x$ for all $t$. Then
$$\sum_{i=1}^{t}\text{min}(||x_i||^2_{V^{-1}_{i-1}}, 1) \le 2log(\frac{det(V_t)}{det(V)}) \le 2dlog(\frac{trace(V) + tS_x^2}{d}) - 2log(det(V)).$$
\end{lemma}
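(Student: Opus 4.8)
The plan is to prove Lemma 2 as a purely deterministic statement about the log-determinant of the regularized design matrix (the ``elliptical potential'' bound), handling the two inequalities separately. The whole argument hinges on the matrix determinant lemma for rank-one updates: since $V_i = V_{i-1} + x_i x_i^T$, I would write
$$\det(V_i) = \det(V_{i-1})\bigl(1 + x_i^T V_{i-1}^{-1} x_i\bigr) = \det(V_{i-1})\bigl(1 + \|x_i\|^2_{V_{i-1}^{-1}}\bigr),$$
with $V_0 = V$. Unrolling this recursion telescopes to $\det(V_t)/\det(V) = \prod_{i=1}^{t}(1 + \|x_i\|^2_{V_{i-1}^{-1}})$, which converts the sum of quadratic forms appearing on the left-hand side into a single determinant ratio.

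For the first inequality I would then use the elementary bound $\min(y,1) \le 2\log(1+y)$, valid for all $y \ge 0$. This is verified by two cases: for $y \in [0,1]$ the function $2\log(1+y) - y$ has derivative $(1-y)/(1+y) \ge 0$ and vanishes at $y=0$, hence is nonnegative; for $y > 1$ one has $2\log(1+y) > 2\log 2 > 1 = \min(y,1)$. Applying this termwise with $y = \|x_i\|^2_{V_{i-1}^{-1}}$ and then pulling the logarithm out of the product gives
$$\sum_{i=1}^{t}\min\bigl(\|x_i\|^2_{V_{i-1}^{-1}},1\bigr) \le 2\sum_{i=1}^{t}\log\bigl(1 + \|x_i\|^2_{V_{i-1}^{-1}}\bigr) = 2\log\frac{\det(V_t)}{\det(V)}.$$

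For the second inequality I would bound $\det(V_t)$ by AM--GM applied to its eigenvalues $\lambda_1,\dots,\lambda_d$: namely $\det(V_t) = \prod_j \lambda_j \le \bigl(\tfrac{1}{d}\sum_j \lambda_j\bigr)^d = \bigl(\operatorname{trace}(V_t)/d\bigr)^d$. Since $\operatorname{trace}(V_t) = \operatorname{trace}(V) + \sum_{i=1}^{t}\|x_i\|^2 \le \operatorname{trace}(V) + t S_x^2$ using $\|x_i\| \le S_x$, taking logarithms yields $\log\det(V_t) \le d\log\bigl((\operatorname{trace}(V) + tS_x^2)/d\bigr)$, and subtracting $\log\det(V)$ gives the claimed upper bound. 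I do not expect a genuine obstacle here, as the proof is deterministic and each step is elementary; the only point requiring care is getting the constant right in $\min(y,1)\le 2\log(1+y)$ and making sure the recursion is anchored correctly at $V_0 = V = \lambda I$ so that the determinant ratio is exactly $\det(V_t)/\det(V)$.
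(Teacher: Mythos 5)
Your proof is correct, and it is essentially the standard argument: the paper itself gives no proof of this lemma (it simply imports it as Lemma 11 of Abbasi-Yadkori et al., 2011), and your chain of steps --- the rank-one matrix determinant lemma to telescope $\det(V_t)/\det(V)$ into a product, the elementary bound $\min(y,1)\le 2\log(1+y)$, and AM--GM on the eigenvalues combined with $\operatorname{trace}(V_t)\le \operatorname{trace}(V)+tS_x^2$ --- is exactly how that cited lemma is proved in the original source. No gaps; the case analysis for $\min(y,1)\le 2\log(1+y)$ and the anchoring at $V_0=V$ are handled correctly.
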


Following the technique of \citep{Abbasi11}, we first analyze the regret $r_t = \langle x_t, \theta^*_{\pi^*(x_t)} \rangle -\langle x_t, \theta^*_{a_t} \rangle$ at each iteration $t$ as follows.

\begin{eqnarray*}
r_t  & = & \langle x_t, \theta^*_{\pi^*(x_t)} \rangle -\langle x_t, \theta^*_{a_t} \rangle \\
& \le & \langle x_t, \tilde{\theta}_{t, a_t} \rangle -\langle x_t, \theta^*_{a_t} \rangle \\
& \le & \langle x_t, \tilde{\theta}_{t, a_t} -  \theta^*_{a_t} \rangle \\
& \le & \langle x_t, \tilde{\theta}_{t, a_t} -  \hat{\theta}_{t-1, a_t} \rangle + \langle x_t, \hat{\theta}_{t-1,a_t} -  \theta^*_{a_t} \rangle\\
& \le & ||x_t||_{\overline{V}^{-1}_{t-1, a_t}} ||\tilde{\theta}_{t, a_t} -  \hat{\theta}_{t-1, a_t}||_{\overline{V}^{-1}_{t-1, a_t}} + ||x_t||_{\overline{V}^{-1}_{t-1, a_t}} ||\hat{\theta}_{t-1,a_t} -  \theta^*_{a_t}||_{\overline{V}^{-1}_{t-1, a_t}} \\
& \le &  2\sqrt{\beta_{t-1}(\delta)} ||x_t||_{\overline{V}^{-1}_{t-1, a_t}}.
\end{eqnarray*}
Next, notice that $r_t \le 2$. Given a $a \in \mathcal A$, we denote $N_t(a) = |\sum_{i=1}^{t} \textbf{1}\{a_i = a\}|$. We can upper bound the cumulative regret $R_T = \sum_{t=1}^{T}\langle x_t, \theta^*_{\pi^*(x_t)} \rangle -\langle x_t, \theta^*_{a_t} \rangle$ after $T$ iterations as follows:
\begin{eqnarray*}
R_T  & \le &  \sqrt{T\sum_{t=1}^{T} r^2_t} \\
& = &  \sum_{t=1}^{T} r_t \\
& \le &  2\sqrt{T\sum_{t=1}^{T} \beta_{t-1}(\delta)\text{min}\{||x_t||_{\overline{V}^{-1}_{t-1, a_t}}, 1\}} \\
& \le &  2\sqrt{T\beta_{T}(\delta)\sum_{a \in \mathcal A} \sum_{i=1}^{N_T(a)} \text{min}\{||x_{I_i(a)}||_{\overline{V}^{-1}_{I_i(a), a}}, 1\}} \\
& \le &  2\sqrt{T\beta_{T}(\delta)} \sqrt{d\sum_{a \in \mathcal A} log(\lambda + \frac{N_T(a)S^2_x}{d})} \\
& \le &  2\sqrt{T\beta_{T}(\delta)} \sqrt{dKlog(\lambda + \frac{TS^2_x}{dK})} \\
& =  & \mathcal O(\sqrt{KT}),
\end{eqnarray*}
where in the forth inequality, we use Lemma 3. In the last inequality, we use Jensen's inequality and the fact that $\sum_{a \in \mathcal A}N_T(a) = T$. We here hide the influence of the number of dimensions $d$. Thus, the theorem is proven.

\section{Proof of Proposition 2}
\begin{proposition}
The following result holds almost surely for all $a \in \mathcal A$.
$$\text{lim}_{N_a \rightarrow \infty} (\frac{1}{N_a}\sum_{i=1}^{N_a}x_ix_i^T)^{-1}(\frac{1}{N_a}\sum_{i=1}^{N_a}x_ir_i) = \theta^*_a,$$
where $N_a = \sum_{i=1}^{N} \mathbf{1}(a_i = a)$, and the $x_i, r_i$ correspond to action $a_i$.
\end{proposition}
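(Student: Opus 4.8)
The plan is to substitute the linear reward model into the estimator and then invoke the strong law of large numbers (SLLN) on the two empirical averages separately. Writing $r_i = \langle x_i, \theta^*_a\rangle + \eta_i = x_i^T\theta^*_a + \eta_i$ for each data point assigned to action $a$, the second average splits as
\[
\frac{1}{N_a}\sum_{i=1}^{N_a} x_i r_i = \Big(\frac{1}{N_a}\sum_{i=1}^{N_a} x_i x_i^T\Big)\theta^*_a + \frac{1}{N_a}\sum_{i=1}^{N_a} x_i \eta_i .
\]
Pre-multiplying by the inverse of the empirical second-moment matrix $\widehat{\Sigma}_a := \frac{1}{N_a}\sum_{i=1}^{N_a} x_i x_i^T$, the estimator becomes exactly $\theta^*_a$ plus a noise contribution, so the whole task reduces to controlling two averages: $\widehat{\Sigma}_a$ and the cross term $\frac{1}{N_a}\sum_{i=1}^{N_a} x_i \eta_i$.

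First I would treat the subsequence of data points with $a_i = a$ as i.i.d.\ draws from the conditional law of a context given that $a$ is selected, which is legitimate because the contexts are drawn i.i.d.\ from $P_x$ and the behaviour policy $\mu$ assigns actions in a memoryless fashion. Since $\|x_i\| \le S_x$, the summands $x_i x_i^T$ are bounded, hence integrable, so by the SLLN $\widehat{\Sigma}_a \to \Sigma_a := \mathbb{E}[x x^T \mid a]$ almost surely. For the cross term, independence of $\eta_i$ from $x_i$ together with the zero-mean $\sigma$-subgaussian assumption gives $\mathbb{E}[x_i \eta_i] = 0$ and $\mathbb{E}\|x_i \eta_i\| \le S_x\,\mathbb{E}|\eta_i| < \infty$, so a coordinatewise SLLN yields $\frac{1}{N_a}\sum_{i=1}^{N_a} x_i \eta_i \to 0$ almost surely.

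To finish I would combine the two limits using the continuity of matrix inversion on the set of invertible matrices: on the almost-sure event where $\widehat{\Sigma}_a \to \Sigma_a$, we get $\widehat{\Sigma}_a^{-1}\to \Sigma_a^{-1}$, whence
\[
\widehat{\Sigma}_a^{-1}\Big(\frac{1}{N_a}\sum_{i=1}^{N_a} x_i r_i\Big) = \theta^*_a + \widehat{\Sigma}_a^{-1}\Big(\frac{1}{N_a}\sum_{i=1}^{N_a} x_i \eta_i\Big) \longrightarrow \theta^*_a + \Sigma_a^{-1}\cdot 0 = \theta^*_a
\]
almost surely. The main obstacle is the invertibility of the limiting matrix $\Sigma_a$: the argument needs $\Sigma_a$ to be positive definite so that $\widehat{\Sigma}_a^{-1}$ is eventually well defined and the continuity-of-inversion step applies; this is an implicit non-degeneracy assumption on the conditional context distribution (equivalently, that the contexts logged for action $a$ span $\mathbb{R}^d$). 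A secondary technical point to handle carefully is that $N_a$ is itself random and tends to infinity only along almost every sample path, so the statement should be read on the event $\{N_a \to \infty\}$, on which the extracted subsequence is i.i.d.\ and both SLLN applications are valid.
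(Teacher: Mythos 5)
Your proof is correct, but it is genuinely more self-contained than what the paper does: the paper does not argue the proposition directly at all, it simply cites Proposition 1 of Tennenholtz et al.\ (the confounded-bandits paper) and observes that the present setting is the special case $M=d$ where all context features are observed. Your route --- substitute $r_i = x_i^T\theta^*_a + \eta_i$, split off the noise term, apply the strong law of large numbers separately to $\widehat{\Sigma}_a = \frac{1}{N_a}\sum_i x_i x_i^T$ and to $\frac{1}{N_a}\sum_i x_i\eta_i$, and finish by continuity of matrix inversion --- is the standard OLS-consistency argument that presumably underlies the cited result as well, so the mathematics is not fundamentally different; what your writeup buys is that it makes explicit the assumptions the paper's citation sweeps under the rug. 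In particular, you correctly identify (i) that the relevant limit matrix is the \emph{conditional} second moment $\Sigma_a = \mathbb{E}[xx^T \mid a]$ under the logging policy, not the marginal under $P_x$, since $\mu$ selects actions in a context-dependent way; (ii) that the statement is vacuous unless $\Sigma_a \succ 0$, an implicit non-degeneracy condition without which the inverse in the proposition need not even exist (and the claim would in fact be false along degenerate directions); and (iii) that $N_a$ is random, so the limit must be read on the event $\{N_a \to \infty\}$, i.e.\ the proposition only has content for actions with positive selection probability under $\mu$. Two minor points worth tightening: the i.i.d.\ claim for the selected subsequence should note that selection depends only on $(x_i, a_i)$ while $\eta_i$ is realized afterwards and independently, which is exactly why the noise on the subsequence remains zero-mean and independent of the contexts; and for the continuity-of-inversion step you should state that $\widehat{\Sigma}_a$ is almost surely invertible for all sufficiently large $N_a$ (which follows from $\widehat{\Sigma}_a \to \Sigma_a \succ 0$), so the estimator is eventually well defined along almost every sample path.
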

\begin{proof}
The correctness of the proposition is based on the fact that all the contexts in the dataset are i.i.d. The proof is similar to that of Proposition 1 in \citep{tennenholtz2020}. The difference is that their model is applied for confounding bandits in which only $M < d$ features of the context are observed while all features of the context are observed in our setting. Therefore we can apply their proof for the case $M = d$.
\end{proof}

\section{Proof of Theorem 2}
In this section, we provide the proof for Theorem 2.

\begin{proof}
Let the instantaneous regret  $r_t = \langle x_t, \theta^*_{\pi^*(x_t)} \rangle -\langle x_t, \theta^*_{a_t} \rangle$. We decompose $R_T$ as follows:
\begin{eqnarray*}
R_T & = & \sum_{t=1}^{T} \langle x_t, \theta^*_{\pi^*(x_t)} \rangle - \sum_{t=1}^{T}\langle x_t, \theta^*_{a_t} \rangle \\
& = & \sum_{t=1}^{T} (\langle x_t, \theta^*_{\pi^*(x_t)} \rangle -\langle x_t, \theta^*_{a_t} \rangle)\\
& = & \underbrace{\sum_{i \in S^{(1)}} r_i}_{\text{Term 1}} + \underbrace{\sum_{i \in S^{(2)}} r_i}_{\text{Term 2}},
\end{eqnarray*}
where $$S^{(1)} = \{1 \le t \le T| a_t \in \mathcal L(\mu) \},$$
$$S^{(2)} = \{1 \le t \le T| a_t \in \mathcal A \setminus \mathcal L(\mu) \}.$$
Since the sets $S^{(1)}, S^{(2)}$ are disjoint, we have that $|S^{(1)}| + |S^{(2)}| = T$. We let $|S^{(2)}| = T'$.  To bound $R_T$, we bound Term 1, Term 2 respectively.
\paragraph{Bounding Term 1}
\begin{eqnarray*}
\sum_{i \in S^{(1)}} r_i & = & \sum_{i \in S^{(1)}}\langle x_t, \theta^*_{\pi^*(x_t)} \rangle -\langle x_t, \theta^*_{a_t} \rangle \\
& \le & \sum_{i \in S^{(1)}} \langle x_t, \hat{\theta}_{a_t} \rangle -\langle x_t, \theta^*_{a_t} \rangle \\
& \le & \sum_{i \in S^{(1)}} \langle x_t, \hat{\theta}_{a_t} -  \theta^{\epsilon}_{a_t} \rangle \\
& \le & \sum_{i \in S^{(1)}} ||x_t|| ||\hat{\theta}_{a_t} -  \theta^{\epsilon}_{a_t}|| \\
& \le & \sum_{i \in S^{(1)}} \epsilon S_x \\
& \le &  \epsilon S_x (T -T')
\end{eqnarray*}
\paragraph{Bounding Term 2} Following the proof steps similar to that of mOFUL, we have that with probability at least $1 -\delta$, $\sum_{i \in S^{(2)}} r_i  \le \mathcal O(\sqrt{(K-L)T'})$.
\begin{eqnarray*}
\sum_{t \in S^{(2)}} r_i  & \le &  \sqrt{T_2 \sum_{t \in S^{(2)}} r^2_t} \\
& \le &  2\sqrt{T'\sum_{t \in S^{(2)}} \beta_{t-1}(\delta)\text{min}\{||x_t||_{\overline{V}^{-1}_{t-1, a_t}}, 1\}} \\
& \le &  2\sqrt{T'\beta_{T}(\delta)\sum_{a \in S^{(2)}} \sum_{i=1}^{N_T(a)} \text{min}\{||x_{I_i(a)}||_{\overline{V}^{-1}_{I_i(a), a}}, 1\}} \\
& \le &  2\sqrt{T'\beta_{T}(\delta)} \sqrt{d\sum_{a \in S^{(2)}} log(\lambda + \frac{N_T(a)S^2_x}{d})} \\
& \le &  2\sqrt{T'\beta_{T}(\delta)} \sqrt{d(K-L)log(\lambda + \frac{TS^2_x}{d(K-L)})} \\
& =  & \mathcal O(\sqrt{(K-L)T'}).
\end{eqnarray*}

Combining the bounds of Term 1 and Term 2, Theorem 2 holds.
\end{proof}

\section{Proof of Theorem 3}
In this section, we will derive a regret for the $\epsilon$-mOFUL-IPS algorithm. We first start with Lemma 1 which is stated as follows. This lemma provides a manner to upper bound the gap $R^+(\pi) - \hat{R}^M(\pi)$.
\begin{lemma} [Based on Theorem 1 of \citep{swaminathan15a}]
Given $(x_1, a_1), ..., (x_n, a_n)$ where $x_i$ is sampled uniformly at random in $\mathcal X$, $a_i \sim \mu(x_i)$, for $n \ge 16$ and for every $\pi \in \Pi^+$ we have, with probability at least $1 - \delta$,
$$R^+(\pi) \le  \hat{R}^M(\pi) + \mathcal O(\sqrt{Var_{\pi}log(N(1/n, \mathcal F_{\Pi}, 2n)/\delta)/n}),$$
\end{lemma}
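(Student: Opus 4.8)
The plan is to recognize this statement as a one-sided uniform deviation bound for the clipped IPS estimator, and to obtain it by combining an empirical Bernstein concentration inequality with a covering-number argument over the policy class, exactly in the spirit of Theorem 1 of \citep{swaminathan15a}. First I would observe that $\hat{R}^M(\pi) = \overline{u}_\pi = \frac{1}{n}\sum_{i=1}^n u_\pi^i$ is an empirical average of i.i.d. terms, and that its expectation is precisely the clipped value, which I take as the definition $R^+(\pi) = \mathbb{E}[u_\pi^i]$ (the expectation being over $x_i \sim P_x$, $a_i \sim \mu(x_i)$, and the reward noise). Because $r_i^\mu$ is bounded and the importance weight is clipped at $M$, each $u_\pi^i$ lies in a bounded range, which is what licenses a Bernstein-type argument; the quantity $Var(\pi)$ is exactly the sample variance of the $u_\pi^i$.

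For a single fixed $\pi$, I would apply the Maurer--Pontil empirical Bernstein inequality, which gives, with probability at least $1-\delta$,
$$R^+(\pi) - \hat{R}^M(\pi) \le \sqrt{\frac{2\, Var(\pi)\log(1/\delta)}{n}} + \mathcal O\!\left(\frac{\log(1/\delta)}{n}\right),$$
so that the leading term already matches the target, up to the replacement of $\log(1/\delta)$ by $\log(N(1/n,\mathcal F_\Pi,2n)/\delta)$, and the requirement $n \ge 16$ serves to ensure the lower-order $\mathcal O(1/n)$ terms are well-controlled relative to the leading term.

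The main work --- and the main obstacle --- is upgrading this pointwise bound to a bound holding uniformly over all $\pi \in \Pi^+$, since $\Pi^+$ is generally infinite. Here I would pass to the induced reward-function class $\mathcal F_\Pi = \{f_\pi\}$ and control its complexity by the covering number $N(1/n,\mathcal F_\Pi,2n)$ at scale $1/n$. Concretely, I would (i) build a $1/n$-cover of $\mathcal F_\Pi$, (ii) take a union bound of the empirical Bernstein bound over the cover, turning $\log(1/\delta)$ into $\log(N(1/n,\mathcal F_\Pi,2n)/\delta)$, and (iii) argue that approximating an arbitrary $\pi$ by its nearest cover element perturbs both $\overline{u}_\pi$ and $Var(\pi)$ by at most $\mathcal O(1/n)$, which is absorbed into the lower-order term. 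The delicate point is step (iii): I must verify that the clipping operation and the variance functional are sufficiently Lipschitz in $f_\pi$ that a $1/n$-perturbation in the covering metric yields only an $\mathcal O(1/n)$ perturbation of the bound. Once this is established, collecting the leading term $\sqrt{Var(\pi^+)\log(N(1/n,\mathcal F_\Pi,2n)/\delta)/n}$ and absorbing the remaining $\mathcal O(\log(\cdot)/n)$ terms into the big-$\mathcal O$ yields the claimed inequality.
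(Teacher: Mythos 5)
Your proposal is correct and takes essentially the same route as the paper: the paper's proof merely observes that $\mu$ has full support for every $\pi \in \Pi^+$ and then defers entirely to the proof of Theorem 1 of \citep{swaminathan15a}, which is precisely the Maurer--Pontil empirical Bernstein bound combined with a covering-number union bound (with the $1/n$-scale approximation step) that you reconstruct. If anything, your write-up is more explicit than the paper's, which leaves the cover construction, the union bound, and the Lipschitz perturbation argument to the citation.
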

\begin{proof}
Given a $\pi \in \Pi^+$, by definition we have
$$\hat{R}^M(\pi) = \frac{1}{|S|} \sum_{(x, a, r) \in S} r \text{min} \{\frac{\pi(a|x)}{\mu(a|x)}, M\}.$$
In the restricted space $\Pi^+$, the logging policy $\mu$ has a full support for all policy $\pi \in \Pi^+$. By the proof similar to the one of Theorem 1 of \citep{swaminathan15a}, we can obtain a generalization error bound for $R^+(\pi) - \hat{R}^M(\pi)$.
\end{proof}

We next provide the regret bound for Theorem 3. We start with the definition of the regret $R_T$:
$$R_T = \sum_{t=1}^{T} \langle x_t, \theta^*_{\pi^*(x_t)} \rangle - \sum_{t=1}^{T} \langle x_t, \theta^*_{a_t} \rangle,$$
where $a_t$ is the action chosen by the algorithm. Following the $\epsilon$-mOFUL-IPS algorithm, we define $a_t$ as follows:
    \begin{equation*}
  a_t=\begin{cases}
    \pi^+(x_t), & \text{if $x_t \in S$ and the condition at line 5 (Algorithm 3) is not satisfied},\\
    \text{argmax}_{a \in \mathcal A, \theta_a \in C_{t-1,a}} \langle x_t, \theta_a \rangle, & \text{if otherwise}.
  \end{cases}
\end{equation*}

To upper bound $R_T$, we will upper bound $r_t = \langle x_t, \theta^*_{\pi^*(x_t)} \rangle - \langle x_t, \theta^*_{a_t} \rangle$, for every $1 \le t \le T$. We consider the three cases:
\begin{itemize}
  \item Case 1: $a_t = \pi^+(x_t)$;
  \item Case 2: $a_t = \text{argmax}_{a \in \mathcal A, \theta_a \in C_{t-1,a}} \langle x_t, \theta_a \rangle$ and $a_t \in \mathcal A \setminus \mathcal L$;
  \item
  Case 3: $a_t = \text{argmax}_{a \in \mathcal A, \theta_a \in C_{t-1,a}} \langle x_t, \theta_a \rangle$ and $a_t \in \mathcal L$.
\end{itemize}

\textbf{Case 1:} $a_t = \pi^+(x_t)$. This holds when the condition at line 5 is not satisfied, i.e., $\langle x_t, \tilde{\theta}_{t, a_t} \rangle \le r^{\mu}_i\text{min}\{\frac{\pi^+(a^{\mu}_i|x^{\mu}_i)}{\mu(a^{\mu}_i|x^{\mu}_i)}, M\}$, where $(x^{\mu}_i, a^{\mu}_i,r^{\mu}_i) \in S$ such that $x^{\mu}_i = x_t$.

We consider the policy $\pi^*_{\mu}$ which is defined as $\pi^*_{\mu}(x) \in \text{argmax}_{a \in \mathcal U(x, \mu )} \langle x, \theta^*_{a} \rangle$. $\pi^*_{\mu}$ is regarded as the optimal policy for the unsupported actions. By this, we have that $ \langle x, \theta^*_{\pi^*(x)} \rangle \ge \langle x, \theta^*_{\pi^*_{\mu}(x)} \rangle$.

Now we consider the following two cases for the relation between $\pi^*(x_t)$ and $\pi^*_{\mu}(x_t)$:
\begin{itemize}
  \item if $\pi^*(x_t) = \pi^*_{\mu}(x_t)$ then
    \begin{eqnarray*}
    \langle x_t, \theta^*_{\pi^*(x_t)} \rangle - \langle x_t, \theta^*_{a_t} \rangle & = & \langle x_t, \theta^*_{\pi^*_{\mu}(x_t)} \rangle -  \langle x_t, \theta^*_{\pi^+(x_t)} \rangle\\
    & \le & \langle x_t, \tilde{\theta}_{t, a_t} \rangle - \langle x_t, \theta^*_{\pi^+(x_t)} \rangle \\
    & \le & r_i\text{min}\{\frac{\pi^+(a^{\mu}_i|x^{\mu}_i)}{\mu(a^{\mu}_i|x^{\mu}_i)}, M\} -  \langle x_t, \theta^*_{\pi^+(x_t)} \rangle,
    \end{eqnarray*}
  where the first inequality holds because  $a_t, \tilde{\theta}_{t, a_t} = \text{argmax}_{a \in S_t, \theta_a \in C_{t-1,a}} \langle x_t, \theta_a \rangle$. The second one holds because of the assumption of Case 1.
  \item if $\pi^*(x_t) \not = \pi^*_{\mu}(x_t)$ then $ \langle x, \theta^*_{\pi^*(x)} \rangle > \langle x, \theta^*_{\pi^*_{\mu}(x)} \rangle$. Therefore, $ \langle x, \theta^*_{\pi^*(x)} \rangle = \langle x, \theta^*_{\pi^+(x)} \rangle$, where policy $\pi^+$ is defined in Section 3.3 and it is regarded as the optimal policy for the supported actions by logging policy $\mu$. Thus,
      $$\langle x_t, \theta^*_{\pi^*(x_t)} \rangle - \langle x_t, \theta^*_{a_t} \rangle  =\langle x, \theta^*_{\pi^+(x)} \rangle - \langle x, \theta^*_{\pi^+(x)} \rangle = 0.$$
\end{itemize}

\textbf{Case 2:} $a_t = \text{argmax}_{a \in \mathcal A, \theta_a \in C_{t-1,a}} \langle x_t, \theta_a \rangle$ and $a_t \in \mathcal A \setminus \mathcal L$. We continue to consider the two cases: Case (i): $x_t \in S$, $\langle x_t, \tilde{\theta}_{t, a_t} \rangle > r^{\mu}_i\text{min}\{\frac{\pi^+(a^{\mu}_i|x^{\mu}_i)}{\mu(a^{\mu}_i|x^{\mu}_i)}, M\}$ and $a_t \in \mathcal U(x_t, \mu) \setminus \mathcal L$, and Case (ii) $x_t \not \in S$ and  $a_t \in \mathcal A \setminus \mathcal L$.

\begin{itemize}
  \item Case (i). For this case, we continue to consider two cases:
\begin{itemize}
  \item if $\pi^*(x_t) = \pi^*_{\mu}(x_t)$ then
   \begin{eqnarray*}
  \langle x_t, \theta^*_{\pi^*(x_t)} \rangle - \langle x_t, \theta^*_{a_t} \rangle & = & \langle x_t, \theta^*_{\pi^*_{\mu}(x_t)} \rangle - \langle x_t, \theta^*_{a_t} \rangle \\
  & \le & \langle x_t, \tilde{\theta}_{t, a_t} \rangle -  \langle x_t, \theta^*_{a_t} \rangle \\
  & = & \langle x_t, \tilde{\theta}_{t, a_t} - \theta^*_{a_t} \rangle \\
  & = & \langle x_t, \tilde{\theta}_{t, a_t} - \hat{\theta}_{t-1,a_t} \rangle +  \langle x_t, \hat{\theta}_{t-1,a_t} - \theta^*_{a_t}  \rangle \\
  & \le &  ||x_t||_{\overline{V}^{-1}_{t-1, a}} ||\tilde{\theta}_{t, a_t} - \hat{\theta}_{t-1,a_t}||_{\overline{V}^{-1}_{t-1, a}} + ||x_t||_{\overline{V}^{-1}_{t-1, a}} ||\hat{\theta}_{t-1,a_t} - \theta^*_{a_t}||_{\overline{V}^{-1}_{t-1, a}} \\
  & \le & 2\sqrt{\beta_{t-1}(\delta)} ||x_t||_{\overline{V}^{-1}_{t-1, a}}
    \end{eqnarray*}

  \item if $\pi^*(x_t) \not = \pi^*_{\mu}(x_t)$ then $ \langle x, \theta^*_{\pi^*(x)} \rangle > \langle x, \theta^*_{\pi^*_{\mu}(x)} \rangle$. Therefore, $ \langle x, \theta^*_{\pi^*(x)} \rangle = \langle x, \theta^*_{\pi^+(x)} \rangle$.
  \begin{eqnarray*}
  \langle x_t, \theta^*_{\pi^*(x_t)} \rangle - \langle x_t, \theta^*_{a_t} \rangle & = & \langle x_t, \theta^*_{\pi^+(x_t)} \rangle - \langle x_t, \theta^*_{a_t} \rangle \\
  & = &  \langle x_t, \theta^*_{\pi^+(x_t)} \rangle  - \langle x_t, \tilde{\theta}_{t, a_t} \rangle  +  \langle x_t, \tilde{\theta}_{t, a_t} \rangle - \langle x_t, \theta^*_{a_t} \rangle \\
  & \le &   \langle x_t, \theta^*_{\pi^+(x_t)} \rangle  - r^{\mu}_i\text{min}\{\frac{\pi^+(a^{\mu}_i|x^{\mu}_i)}{\mu(a^{\mu}_i|x^{\mu}_i)}, M\} + \langle x_t, \tilde{\theta}_{t, a_t} \rangle - \langle x_t, \theta^*_{a_t} \rangle \\
  & \le & \langle x_t, \theta^*_{\pi^+(x_t)} \rangle  - r^{\mu}_i\text{min}\{\frac{\pi^+(a^{\mu}_i|x^{\mu}_i)}{\mu(a^{\mu}_i|x^{\mu}_i)}, M\} + 2\sqrt{\beta_{t-1}(\delta)} ||x_t||_{\overline{V}^{-1}_{t-1, a}},
  \end{eqnarray*}
  where the first inequality, we use the assumption of Case 2 that $\langle x_t, \tilde{\theta}_{t, a_t} \rangle > r_i\text{min}\{\frac{\pi^+(a^{\mu}_i|x^{\mu}_i)}{\mu(a^{\mu}_i|x^{\mu}_i)}, M\}$. In the last inequality, we use the proof similar as above (when $\pi^*(x_t) = \pi^*_{\mu}(x_t)$ in Case 2).
\end{itemize}
  \item Case (ii). We have
  \begin{eqnarray*}
  \langle x_t, \theta^*_{\pi^*(x_t)} \rangle - \langle x_t, \theta^*_{a_t} \rangle  & = & \langle x_t, \theta^*_{\pi^*(x_t)} \rangle - \langle x_t, \theta^*_{a_t} \rangle \\
   & \le & 2\sqrt{\beta_{t-1}(\delta)} ||x_t||_{\overline{V}^{-1}_{t-1, a}},
  \end{eqnarray*}
  where in the last inequality, we use the proof similar as above. This case is as in the $\epsilon$-mOFUL without using an offline estimator.
\end{itemize}

\textbf{Case 3:} $a_t = \text{argmax}_{a \in \mathcal A, \theta_a \in C_{t-1,a}} \langle x_t, \theta_a \rangle$ and $a_t \in \mathcal L$. Similar to Case 2, we consider the two cases: Case (i): $x_t \in S$, $\langle x_t, \tilde{\theta}_{t, a_t} \rangle > r^{\mu}_i\text{min}\{\frac{\pi^+(a^{\mu}_i|x^{\mu}_i)}{\mu(a^{\mu}_i|x^{\mu}_i)}, M\}$ and $a_t \in  \mathcal L(\mu)$, and Case (ii) $x_t \not \in S$ and  $a_t \in \mathcal L$.

\begin{itemize}
  \item Case (i). Similar to Case 2, we continue to consider two cases:
  \begin{itemize}
    \item if $\pi^*(x_t) = \pi^*_{\mu}(x_t)$ then
        \begin{eqnarray*}
        \langle x_t, \theta^*_{\pi^*(x_t)} \rangle - \langle x_t, \theta^*_{a_t} \rangle & = & \langle x_t, \theta^*_{\pi^*_{\mu}(x_t)} \rangle -  \langle x_t, \theta^{*}_{a_t} \rangle \\
        & \le &  \langle x_t, \hat{\theta}_{a_t} \rangle -\langle x_t, \theta^*_{a_t} \rangle \\
        & \le &  \langle x_t, \hat{\theta}_{a_t} -  \theta^*_{a_t} \rangle \\
        & \le &  ||x_t|| ||\hat{\theta}_{a_t} -  \theta^{\epsilon}_{a_t}|| \\
        & \le &  \epsilon S_x
        \end{eqnarray*}

    \item if $\pi^*(x_t) \not = \pi^*_{\mu}(x_t)$ then $ \langle x, \theta^*_{\pi^*(x)} \rangle > \langle x, \theta^*_{\pi^*_{\mu}(x)} \rangle$. Therefore, $ \langle x, \theta^*_{\pi^*(x)} \rangle = \langle x, \theta^*_{\pi^+(x)} \rangle$.
        \begin{eqnarray*}
        \langle x_t, \theta^*_{\pi^*(x_t)} \rangle - \langle x_t, \theta^*_{a_t} \rangle & = & \langle x_t, \theta^*_{\pi^+(x_t)} \rangle - \langle x_t, \hat{\theta}_{a_t} \rangle \\
        & = &  \langle x_t, \theta^*_{\pi^+(x_t)} \rangle  - \langle x_t, \tilde{\theta}_{t, a_t} \rangle  +  \langle x_t, \tilde{\theta}_{t, a_t} \rangle - \langle x_t, \hat{\theta}_{a_t} \rangle \\
        & \le &   \langle x_t, \theta^*_{\pi^+(x_t)} \rangle  - r^{\mu}_i\text{min}\{\frac{\pi^+(a^{\mu}_i|x^{\mu}_i)}{\mu(a^{\mu}_i|x^{\mu}_i)}, M\} + \langle x_t, \tilde{\theta}_{t, a_t} \rangle - \langle x_t, \hat{\theta}_{a_t} \rangle \\
        & \le & \langle x_t, \theta^*_{\pi^+(x_t)} \rangle  - r^{\mu}_i\text{min}\{\frac{\pi^+(a^{\mu}_i|x^{\mu}_i)}{\mu(a^{\mu}_i|x^{\mu}_i)}, M\} + \epsilon S_x,
        \end{eqnarray*}
        where the first inequality, we use the assumption of Case 3 that $\langle x_t, \tilde{\theta}_{t, a_t} \rangle > r_i\text{min}\{\frac{\pi^+(a^{\mu}_i|x^{\mu}_i)}{\mu(a^{\mu}_i|x^{\mu}_i)}, M\}$. In the last inequality, we use the proof similar as above (when $\pi^*(x_t) = \pi^*_{\mu}(x_t)$ in Case 3).
    \end{itemize}
  \item Case (ii). We have
  \begin{eqnarray*}
  \langle x_t, \theta^*_{\pi^*(x_t)} \rangle - \langle x_t, \theta^*_{a_t} \rangle  & = & \langle x_t, \theta^*_{\pi^*(x_t)} \rangle - \langle x_t, \hat{\theta}_{a_t} \rangle \\
   & \le & \epsilon S_x,
  \end{eqnarray*}
  where in the last inequality, we use the proof similar as above.
\end{itemize}

For all cases, we can summarize as follows:
\begin{equation}
  \langle x_t, \theta^*_{\pi^*(x_t)} \rangle - \langle x_t, \theta^*_{a_t} \rangle = \begin{cases}
    r^{\mu}_i\text{min}\{\frac{\pi^+(a^{\mu}_i|x^{\mu}_i)}{\mu(a^{\mu}_i|x^{\mu}_i)}, M\} -  \langle x_t, \theta^*_{\pi^+(x_t)} \rangle, & \text{or} ,\\
    0, & \text{or}, \\
    2\sqrt{\beta_{t-1}(\delta)} ||x_t||_{\overline{V}^{-1}_{t-1, a}}, & \text{or}, \\
    \langle x_t, \theta^*_{\pi^+(x_t)} \rangle  - r^{\mu}_i\text{min}\{\frac{\pi^+(a^{\mu}_i|x^{\mu}_i)}{\mu(a^{\mu}_i|x^{\mu}_i)}, M\} + 2\sqrt{\beta_{t-1}(\delta)} ||x_t||_{\overline{V}^{-1}_{t-1, a}}, & \text{or}, \\
    \epsilon S_x, & \text{or}, \\
    \langle x_t, \theta^*_{\pi^+(x_t)} \rangle  - r_i\text{min}\{\frac{\pi^+(a^{\mu}_i|x^{\mu}_i)}{\mu(a^{\mu}_i|x^{\mu}_i)}, M\} + \epsilon S_x, & \text{or}.
  \end{cases}
\end{equation}
Thus, the regret of the $\epsilon$-mOFUL-IPS is bounded as
\begin{eqnarray*}
\sum_{t=1}^{T} (\langle x_t, \theta^*_{\pi^*(x_t)} \rangle - \sum_{t=1}^{T} \langle x_t, \theta^*_{a_t} \rangle) \le
\underbrace{\sum_{t \in B_1} (r^{\mu}_i\text{min}\{\frac{\pi^+(a^{\mu}_i|x^{\mu}_i)}{\mu(a^{\mu}_i|x^{\mu}_i)}, M\} -  \langle x_t, \theta^*_{\pi^+(x_t)} \rangle)}_{\text{Term 1}} +
\underbrace{\sum_{t \in B_2} \epsilon S_x}_{\text{Term 2}} + \\
+ \underbrace{\sum_{t \in B_3} 2\sqrt{\beta_{t-1}(\delta)} ||x_t||_{\overline{V}^{-1}_{t-1, a}}}_{\text{Term 3}} + \underbrace{\sum_{t \in B_4} (\langle x_t, \theta^*_{\pi^+(x_t)} \rangle  - r^{\mu}_i\text{min}\{\frac{\pi^+(a^{\mu}_i|x^{\mu}_i)}{\mu(a^{\mu}_i|x^{\mu}_i)}, M\})}_{\text{Term 4}},
\end{eqnarray*}
where the index sets $B_1$, $B_2$, $B_3$ and $B_4$ are defined as
$$B_1 = \{ 1 \le t \le T| \langle x_t, \theta^*_{\pi^*(x_t)} \rangle - \langle x_t, \theta^*_{a_t} \rangle \le  r^{\mu}_i\text{min}\{\frac{\pi^+(a^{\mu}_i|x^{\mu}_i)}{\mu(a^{\mu}_i|x^{\mu}_i)}, M\} -  \langle x_t, \theta^*_{\pi^+(x_t)} \rangle\},$$

$$B_2 = \{ 1 \le t \le T| \epsilon S_x \text{ exists in the upper bound of } \langle x_t, \theta^*_{\pi^*(x_t)} \rangle - \langle x_t, \theta^*_{a_t} \rangle \text{ in Eq (3)}\},$$

$$B_3 = \{ 1 \le t \le T| 2\sqrt{\beta_{t-1}(\delta)} ||x_t||_{\overline{V}^{-1}_{t-1, a}} \text{ exists in the upper bound of } \langle x_t, \theta^*_{\pi^*(x_t)} \rangle - \langle x_t, \theta^*_{a_t} \rangle \text{ in Eq (3)}\},$$

$$B_4 = \{ 1 \le t \le T| \langle x_t, \theta^*_{\pi^*(x_t)} \rangle - \langle x_t, \theta^*_{a_t} \rangle \le \langle x_t, \theta^*_{\pi^+(x_t)} \rangle  - r^{\mu}_i\text{min}\{\frac{\pi^+(a^{\mu}_i|x^{\mu}_i)}{\mu(a^{\mu}_i|x^{\mu}_i)}, M\} \}.$$

A constraint is $|B_1| + |B_2| + |B_3| + |B_4| = T$.

\textbf{Bounding Term 1}:  if $B_1  = \emptyset$ then Term 1 is zero. Assume that $B_1 \not \emptyset$. We apply Hoeffding's bound to the random variable $X_t = r^{\mu}_i\text{min}\{\frac{\pi^+(a^{\mu}_i|x^{\mu}_i)}{\mu(a^{\mu}_i|x^{\mu}_i)}, M\}$ for every $t \in B_1$. We note that $(x^{\mu}_i, a^{\mu}_i, r^{\mu}_i) \in S$ and $x_t = x^{\mu}_i$.  Set $\hat{V}_T = \frac{1}{|B_1|} \sum_{t \in B_1} r^{\mu}_i\text{min}\{\frac{\pi^+(a^{\mu}_i|x^{\mu}_i)}{\mu(a^{\mu}_i|x^{\mu}_i)}, M\}$. We have $|\hat{V}_T - \mathbb{E}[\hat{V}_T]| \le M\sqrt{\frac{ln(2/\delta)}{2|B_1|}}$. Therefore, $\hat{V}_T \le M\sqrt{\frac{ln(2/\delta)}{2|B_1|}} + \mathbb{E}[\hat{V}_T]$. Similar to the proof of Lemma 3.1 of \cite{Strehl10}, we have $\mathbb{E}[\hat{V}_T] \le R^+(\pi^+)$. Thus, we have
$$\sum_{t \in B_1} (r^{\mu}_i\text{min}\{\frac{\pi^+(a^{\mu}_i|x^{\mu}_i)}{\mu(a^{\mu}_i|x^{\mu}_i)}, M\} -  \langle x_t, \theta^*_{\pi^+(x_t)} \rangle) \le M\sqrt{\frac{ln(2/\delta)}{2|B_1|}} +  R^+(\pi^+) - \sum_{t \in B_1}\langle x_t, \theta^*_{\pi^+(x_t)} \rangle.$$

Further, we have $\mathbb{E}[\frac{1}{|B_1|} \sum_{i=1}^{|B_1|} \langle x_t, \theta^*_{\pi^+(x_t)} \rangle] = R^+(\pi^+)$.  Again, we apply Hoeffding's bound to $|B_1|$ random variables $\langle x_t, \theta^*_{\pi^+(x_t)} \rangle$.  We have $|\frac{1}{|B_1|} \sum_{i=1}^{|B_1|} \langle x_t, \theta^*_{\pi^+(x_t)} \rangle - R^+(\pi^+)| \le \sqrt{\frac{ln(2/\delta)}{|B_1|}}$ with probability at least $1 -\delta$.

Thus, $\sum_{t \in B_1} (r^{\mu}_i\text{min}\{\frac{\pi^+(a^{\mu}_i|x^{\mu}_i)}{\mu(a^{\mu}_i|x^{\mu}_i)}, M\} -  \langle x_t, \theta^*_{\pi^+(x_t)} \rangle) \le \mathcal O(\sqrt{\frac{ln(2/\delta)}{|B_1|}})$ with probability at least $1 -\delta$.

\textbf{Bounding Term 2}: $\sum_{t \in B_2} \epsilon S_x  = \epsilon S_x|B_2|$.

\textbf{Bounding Term 3}: By the proof similar as the one of Term 2 of Theorem 2, $\sum_{t \in B_3} 2\sqrt{\beta_{t-1}(\delta)} ||x_t||_{\overline{V}^{-1}_{t-1, a}} \le \mathcal O(\sqrt{(K-L)|B_3|})$ with probability $1 -\delta$.

\textbf{Bounding Term 4}: if $B_4  = \emptyset$ then Term 4 is zero. Assume that $B_4 \not = \emptyset$. We apply Lemma 1 for a set of $|B_4|$ samples $\{(x_{t_1}, a_{t_1}), ..., \}$, we achieve
$$\sum_{t \in B_4} (\langle x_t, \theta^*_{\pi^+(x_t)} \rangle  - r^{\mu}_i\text{min}\{\frac{\pi^+(a^{\mu}_i|x^{\mu}_i)}{\mu(a^{\mu}_i|x^{\mu}_i)}, M\}) \le \mathcal O(\sqrt{Var_{\pi}log(N(1/|B_4|, \mathcal F_{\Pi}, 2|B_4|)/\delta)/|B_4|}),$$
with probability $1 -\delta$.

Finally, with probability at least $1 -\delta$, we get that $\sum_{t=1}^{T} (\langle x_t, \theta^*_{\pi^*(x_t)} \rangle - \sum_{t=1}^{T} \langle x_t, \theta^*_{a_t} \rangle) \le \mathcal O(\epsilon S_x|B_2| + \sqrt{(K-L)|B_3|} +
\sqrt{Var_{\pi}log(N(1/|B_4|, \mathcal F_{\Pi}, 2|B_4|)/\delta)|B_4|})$.
\end{document}